\newcolumntype{?}{!{\vrule width 1pt}}
\newcommand{\grayc}{{\cellcolor[rgb]{0.945,0.945,0.945}}}
\newcommand*{\Scale}[2][4]{\scalebox{#1}{$#2$}}%
\def\eqref#1{equation~\ref{#1}}
\def\1{\bm{1}}
\def\vp{{\bm{p}}}
\DeclareMathAlphabet{\mathsfit}{\encodingdefault}{\sfdefault}{m}{sl}
\SetMathAlphabet{\mathsfit}{bold}{\encodingdefault}{\sfdefault}{bx}{n}
\def\gA{{\mathcal{A}}}
\def\gG{{\mathcal{G}}}
\def\gI{{\mathcal{I}}}
\def\gL{{\mathcal{L}}}
\def\gP{{\mathcal{P}}}
\def\gS{{\mathcal{S}}}
\def\gX{{\mathcal{X}}}
\def\gZ{{\mathcal{Z}}}
\def\sR{{\mathbb{R}}}
\newcommand{\stdv}[1]{\scriptsize$\pm$#1}
\DeclareMathOperator*{\argmin}{arg\,min}
\crefname{section}{Sec.}{Secs.}
\Crefname{section}{Section}{Sections}
\Crefname{table}{Table}{Tables}
\crefname{table}{Tab.}{Tabs.}
\Crefname{definition}{Def.}{Defs}
\Crefname{equation}{Eq.}{Eqs.}
\Crefname{proposition}{Prop.}{Props.}
\newtheorem{definition}{Definition}
\newtheorem{proposition}{Proposition}
\newtheorem{lemma}{Lemma}
\newtheorem{corollary}{Corollary}
\newcommand{\RQ}[1]{{\textbf{RQ#1}}}
\def\eg{\emph{e.g}.\xspace}
\def\ie{\emph{i.e}.\xspace}
\newcommand{\paragrapht}[1]{\noindent\textbf{#1}}
\newcommand{\cmark}{\ding{51}}%
\newcommand{\xmark}{\ding{55}}%
\newcommand{\model}{\textcolor{black}{LBS}}
\newcommand{\geofeature}{\textcolor{black}{geometry-aware representation}}
\newcommand{\Geofeature}{\textcolor{black}{Geometry-aware representation}}
\begin{document}

\title{Learning Geometry-aware Representations by Sketching}

\author{Hyundo Lee, Inwoo Hwang, Hyunsung Go, Won-Seok Choi, Kibeom Kim, Byoung-Tak Zhang\\
AI Institute, Seoul National University\\
{\tt\small \{hdlee, iwhwang, hsgo, wchoi, kbkim, btzhang\}@bi.snu.ac.kr}
}
\maketitle

\begin{abstract}
Understanding geometric concepts, such as distance and shape, is essential for understanding the real world and also for many vision tasks.
To incorporate such information into a visual representation of a scene, 
we propose learning to represent the scene by sketching, inspired by human behavior.
Our method, coined Learning by Sketching (LBS), learns to convert an image into a set of colored strokes that explicitly incorporate the geometric information of the scene in a single inference step without requiring a sketch dataset.
A sketch is then generated from the strokes where CLIP-based perceptual loss maintains a semantic similarity between the sketch and the image.
We show theoretically that sketching is equivariant with respect to arbitrary affine transformations and thus provably preserves geometric information.
Experimental results show that LBS substantially improves the performance of object attribute classification on the unlabeled CLEVR dataset, domain transfer between CLEVR and STL-10 datasets, and for diverse downstream tasks, confirming that LBS provides rich geometric information.
\end{abstract}

\section{Introduction}
\label{sec:intro}
Since geometric principles form the bedrock of our physical world, 
many real-world scenarios involve geometric concepts such as position, shape, distance, and orientation.
For example, grabbing an object requires estimating its shape and relative distance. 
Understanding geometric concepts is also essential for numerous vision tasks such as image segmentation, visual reasoning, and pose estimation~\cite{definition}.
Thus, it is crucial to learn a visual representation of the image that can preserve such information~\cite{geometry}, which we call \textit{geometry-aware representation}.

\begin{figure}[t]
\centering
\begin{minipage}{0.93\columnwidth}
\includegraphics[width=\columnwidth]{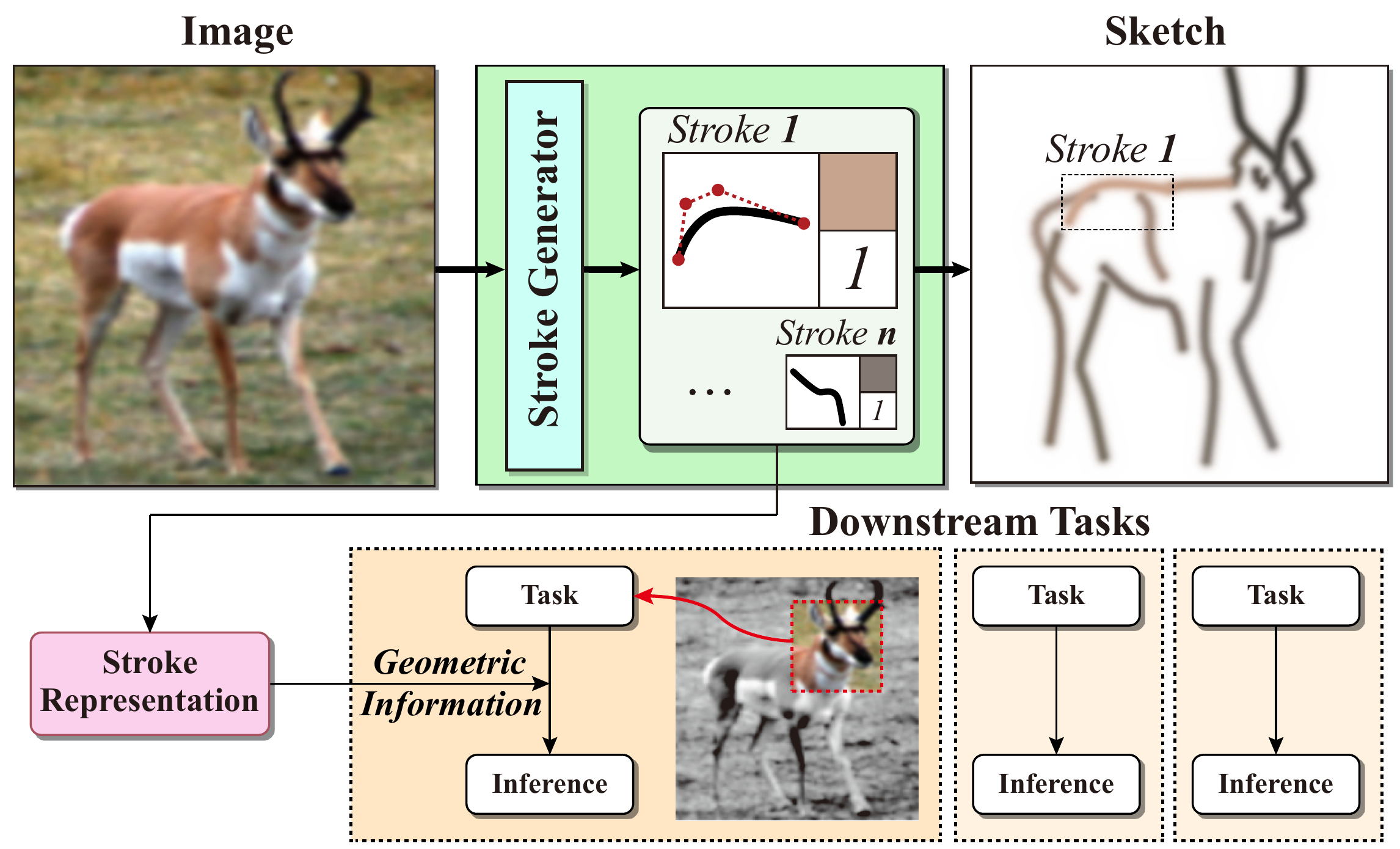}
\end{minipage}
\setlength{\belowcaptionskip}{-6pt}
\vspace{-0.1em}
\caption{
Overview of \model{}, which aims to generate sketches that accurately reflect the geometric information of an image.
A sketch consists of a set of strokes represented by a parameterized vector that specifies curve, color, and thickness.
We leverage it as a geometry-aware representation for various downstream tasks.
}
\label{fig:1}
\end{figure}

However, there is still a challenge in learning geometry-aware representations in a compact way that can be useful for various downstream tasks.
Previous approaches have focused on capturing geometric features of an image in a 2D grid structure, using methods such as handcrafted feature extraction~\cite{HoG, canny, lowlevelfeature1, lowlevelfeature2}, segmentation maps~\cite{u2net, grid}, or convolution features~\cite{lowlevelCNN, geoSSL}.
Although these methods are widely applicable to various domains, 
they often lack compactness based on a high-level understanding of the scene and tend to prioritize nuisance features such as the background.
Another line of work proposes architectures that guarantee to preserve geometric structure~\cite{equiCNN, steerable, e2equivariant, scaleCNN, groupStructure} or disentangle prominent factors in the data~\cite{betavae, anneal, betaTCVAE, factorvae}.
Although these methods can represent geometric concepts in a compact latent space, 
they are typically designed to learn features that are specific to a particular domain and often face challenges in generalizing to other domains~\cite{disentngleGeneralize}.

\begin{figure*}[t]
\centering
\begin{minipage}{1.0\textwidth}
\includegraphics[width=0.98\textwidth]{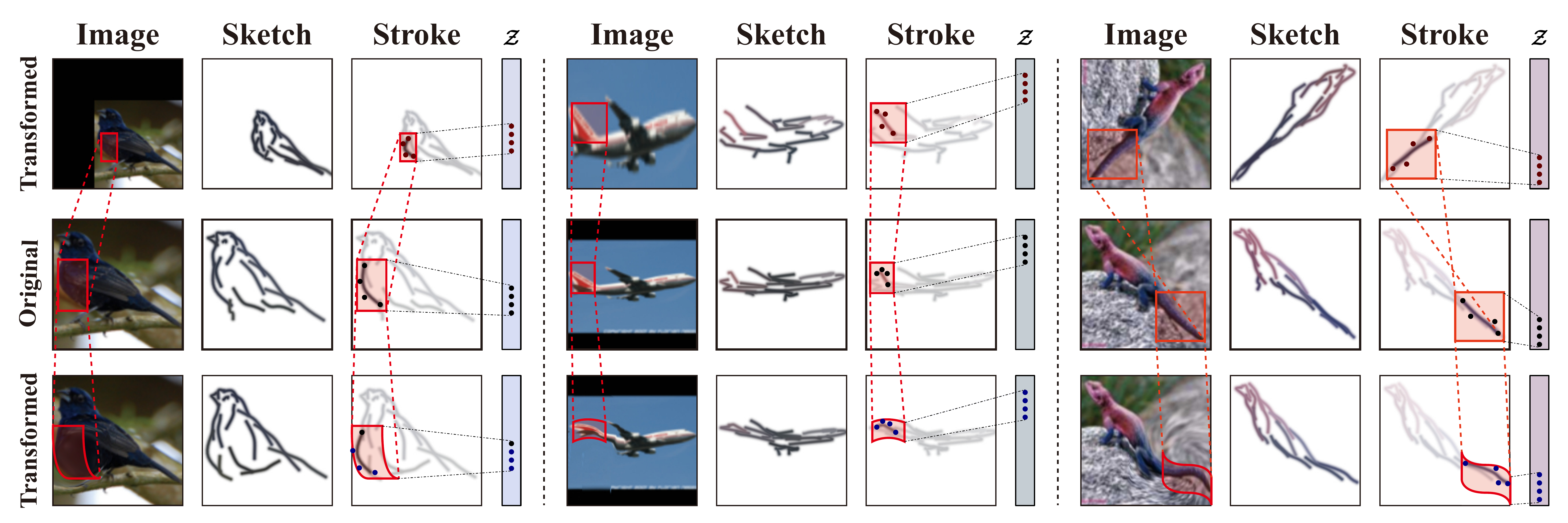}
\end{minipage}
\setlength{\belowcaptionskip}{-5pt}  
\vspace*{-1em}
\caption{Examples of how sketches capture essential geometric concepts such as shape, size, orientation, curvature, and local distortion.
The control points of each stroke compactly represent the local geometric information of the corresponding region in the sketch. The strokes as a whole maintain the geometric structure of the entire image under various transformations in the image domain.}
\label{fig:2}
\end{figure*}

In this study, we present a novel approach to learning geometry-aware representations via \textit{sketching}.
Sketching, which is the process of converting the salient features of an image into a set of color-coded strokes, as illustrated in ~\Cref{fig:1}, is the primary means by which humans represent images while preserving their geometry.
Our key idea is that sketches can be a compact, high-level representation of an image that accurately reflects geometric information.
Sketching requires a high-level understanding of the scene, as it aims to capture the most salient features of the image and abstract them into a limited number of strokes.
In addition, a sketch can be represented as a set of parameters by replacing each stroke with parametric curves.
Sketching has also been linked to how people learn geometric concepts~\cite{drawinghuman4}.
Based on these properties, we directly use strokes as a geometry-aware representation and utilize them for downstream tasks.
Under the theoretical framework of geometric deep learning~\cite{book}, we conduct theoretical analysis to validate the effectiveness of strokes as a representation and prove their ability to preserve affine transformations.

To validate our hypothesis,
we introduce Learning by Sketching (LBS), a method that generates abstract sketches coherent with the geometry of an input image.
Our model is distinct from existing sketch generation methods as it does not require a sketch dataset for training, which often has limited abstraction or fails to accurately reflect geometric information.
Instead, \model{} learns to convert an image into a set of colored Bézier curves that explicitly represent the geometric concepts of the input image.
To teach the style of sketching, we use perceptual loss based on the CLIP model~\cite{clip, clipasso}, which measures both semantic and geometric similarities between images and generated sketches.
We propose a progressive optimization process that predicts how strokes will be optimized from their initial positions through CLIP-based perceptual loss to generate abstract sketches in a single inference step.
As a result, \model~generates a representation that reflects visual understanding in a single inference step, without requiring a sketch dataset. This produces highly explainable representations through sketches, as illustrated in \Cref{fig:2}.

We conduct experimental analyses to evaluate the effectiveness of our approach, learning by sketching, by addressing multiple research questions in various downstream tasks, including:
\textbf{(i)} describing the relationships between geometric primitives,
\textbf{(ii)} demonstrating simple spatial reasoning ability by conveying global and local geometric information,
\textbf{(iii)} containing general geometric information shared across different domains, and
\textbf{(iv)} improving performance on FG-SBIR, a traditional sketch task.

\section{Related work}
\label{sec:related}
\subsection{Geometry-aware representation learning}
Previous studies that aim to capture geometric information within the data can be categorized into two approaches: 
Firstly, explicitly capturing the geometric information with features on the image space, and secondly, inducing a compact feature vector that preserves the geometric structure of the data.

\paragrapht{Features on image space.}
The most straightforward approach to representing geometric structures in images is to establish features directly on the image space, \ie, to define features on the 2D grid structure.
Approaches that utilize this method include those that employ traditional edge detection algorithms~\cite{sobel, canny, hed, xdog}, low-level convolutional features~\cite{lowlevelCNN, clipasso}, and segmentation maps~\cite{segmentation, u-net, u2net}.
However, these features are unsuitable for conversion into compact vector representations for downstream tasks.
Alternative approaches include utilizing hand-crafted feature detection algorithms~\cite{HoG, orb,lowlevelfeature1, lowlevelfeature2}, clustering specific areas of an image into superpixels~\cite{superpixel, grid}, and representing the relationships between specific areas as a graph~\cite{graph1, graph2}.
Our approach is primarily notable for its ability to capture the most salient image features through an overall understanding of the scene, resulting in a concise representation.

\paragrapht{Geometric deep learning.}
Research has also been conducted to develop informative and compact representations that preserve geometric structures in a latent vector space.
One representative method is geometric deep learning~\cite{book}, which involves preserving the geometric properties of an image by learning an invariant (\ie, maintaining the output identically) or an equivariant representation (\ie, there exists an operator acting on the feature space corresponding to the transformations) with respect to geometric transformations.
Steerable networks~\cite{equiCNN, steerable, scaleCNN, e2equivariant} extend the translation equivariance of CNNs to a larger group of symmetries and introduce an efficient weight-sharing architecture.
However, designing their architecture requires complex analytic solutions, and they are constrained to particular transformations~\cite{implicitEquivariance}.
Another approach is to disentangle factors of variation into distinct latent subspaces~\cite{vae, betavae, anneal, factorvae, betaTCVAE, groupStructure, disentangledSSL}.
However, these methods face difficulties in handling changing factors of variation~\cite{disentngleGeneralize} and therefore struggle to generalize to arbitrary geometric transformations.
Sketching, in contrast to these methods, can naturally ensure equivariance with respect to geometric transformations by being defined on the image space.
In \cref{sec:property} we show that strokes can be equivariant with respect to arbitrary affine transformations.

\subsection{Sketch generation methods} \label{subsec:ref_sketch}
Human-drawn sketches are widely used as valuable descriptions in various applications, including image retrieval~\cite{retrieval1, retrieval2, retrieval3}, modeling 3D shapes and surfaces~\cite{3dmodeling1, 3dmodeling2, 3dmodeling3, 3dmodeling4}, and image editing~\cite{manipluation1, manipluation2, manipluation3, manipluation4}. 
Unfortunately, previous works that aim to learn sketching or drawing mostly endeavor to generate artistic works, rather than focusing on its applicability~\cite{sketchRnn, paintTransformer, spiral, spiralpp, rl2, optimization}.

Sketch generation models often rely on explicit sketch datasets to follow the style of human drawings~\cite{cyclesketch1, cyclesketch2, sketchrnn1, sketchrnn2, sketchformer, sketchBert}.
However, many existing datasets are not designed to preserve geometric information~\cite{sketchy, sketchyCOCO, quickdraw} or contain sketches without sophisticated image abstraction, resembling simple edge detection~\cite{data1, facesketch}.
This presents challenges for sketch models to accurately abstract an image while preserving its geometries.
Without sketch datasets, stroke-based rendering methods usually minimize the reconstruction loss which is related to the pixel-wise distance,
which promotes learning in the style of \textit{painting}~\cite{optimization, paintTransformer, renderer, spiral, spiralpp, rl2, strokenet}. 
There have also been studies that attempt to capture the geometric and semantic features of a scene through imitating the style of line drawing~\cite{linedrawings2, linedrawings3, linedrawings}.
However, these methods require numerous strokes to cover the whole canvas, primarily focusing on the superfluous background instead of salient features.

\paragrapht{Abstract sketches for representation.}
Only a few works aim to generate sketches for the sake of representing an image with a limited number of strokes.
Learning to Draw (LtD) \cite{learningToDraw} creates a representative sketch of an image and uses it as a communication channel between agents.
While it primarily focuses on communicating with a specific receiver,
we aim to leverage strokes as a general representation for various downstream tasks.
CLIPasso~\cite{clipasso} utilizes the capabilities of CLIP~\cite{clip}, a neural network trained on images paired with text,
to synthesize abstract sketches by maintaining both semantic and geometric attributes.
Generating representations with this method, however, is impractical due to its reliance on an optimization-based approach that is extremely time consuming and numerically unstable.
In contrast, our method generates a sketch of an image in a single inference step after training, 
and is relatively more stable and suitable as a representation.

\section{Mathematical framework}
\label{sec:property}

In this section, we present our theoretical analysis on how sketches can facilitate geometry-aware representation.
We begin with a formal definition of sketch and strokes.
We then introduce a mathematical framework based on geometric deep learning, which aims to impose the geometric prior on the learned feature.\footnote{See \cite{book} for the extensive review.}
Finally, we show that with a loss function that satisfies certain conditions,
strokes can be equivariant with respect to arbitrary affine transformations and thus preserve geometric information.

\begin{figure*}[ht]
  \centering
  \includegraphics[width=0.94\textwidth]{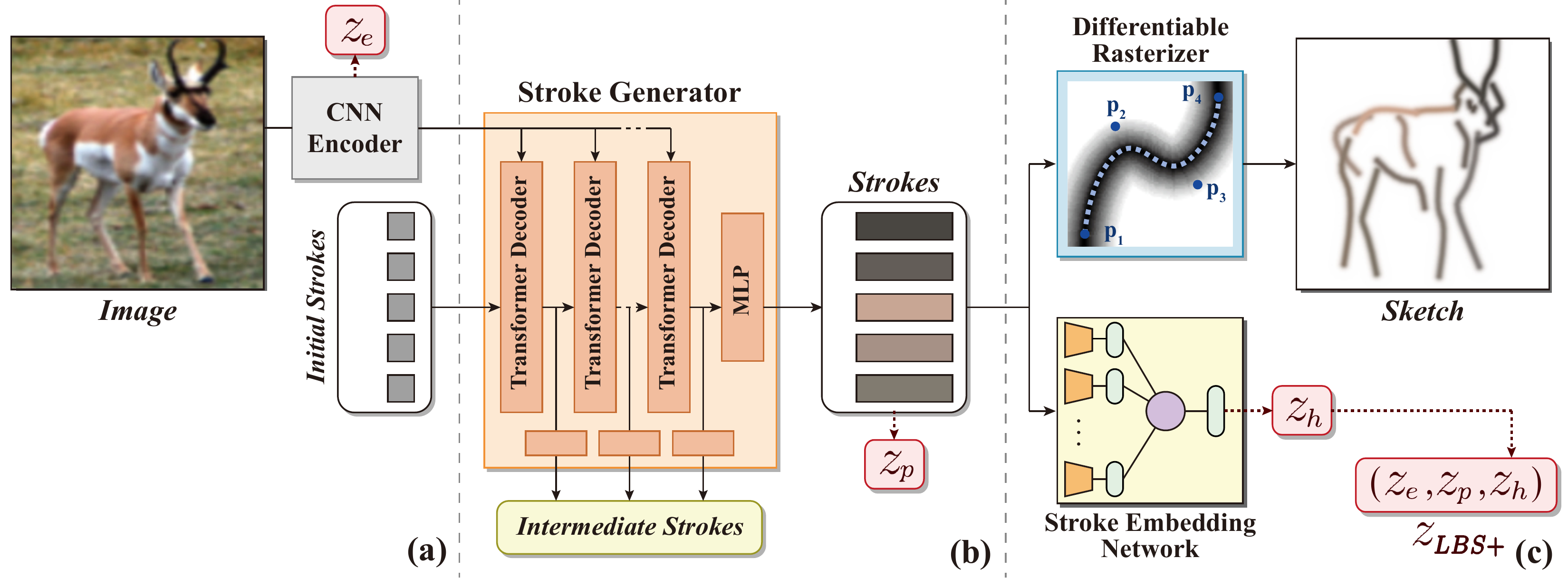}
  \setlength{\belowcaptionskip}{-8pt}  
  \vspace*{-0.5em}
  \caption{Visualization of the overall architecture of \model. \textbf{(a)} A CNN-based encoder extracts the feature of the image. \textbf{(b)} Initial strokes are updated repeatedly with a stroke generator based on a Transformer decoder. For the progressive optimization process, intermediate strokes are extracted from each intermediate layer. \textbf{(c)} The rasterized sketch and stroke embedding $z_h$ are generated from the strokes. Features derived from the CNN encoder, strokes, and stroke embedding are combined to form the final representation, $z_{LBS+}$.}
  \label{fig:3}
\end{figure*}

\subsection{Preliminaries and notations}
We consider an image $I\in\gI$ as \textit{signals} on a physical domain $\Omega$, which maps pixel coordinates to the $C$ color channels, \ie, $I: \Omega \rightarrow \sR^C$.
Although $\Omega$ is a 2D grid space for the real image,
we assume $\Omega$ as a homogeneous space, where each border is connected to the opposite side.

\paragrapht{Stroke and sketch.} 
A \textbf{\emph{stroke}} is a tuple $p=(t, c, w)$ composed of $k$ control points $t\in\Omega^k$, color $c\in[0, 1]^C$, and thickness $w\in[0, 1]$ of the Bézier curve.
A rendering process $r: \gP \rightarrow \gS(\subset \gI)$ converts a set of $n$ strokes $\vp=\left\{p^{(1)}, \cdots, p^{(n)}\right\} \in \gP$ into the corresponding image (see Appendix \ref{appendix:renderer} for further details),
which we denote as \textbf{\emph{sketch}} $S=r(\vp)\in\gS$.
Let $f: \gI\rightarrow\gP$ be the embedding function which maps an image to a set of strokes.
We denote $f(I)\in\gP$ and $[r\circ f](I)\in\gS$ respectively as a \textbf{\emph{stroke representation}} and a \textbf{\emph{sketch representation}} of the image $I$ w.r.t $f$.
To generate a sketch that accurately reflects the given image, we adopt a metric function $\gL: \gI\times\gS \rightarrow \sR$ that measures the distance between the image and the sketch.

\begin{definition}
\label{def:optimalrep}
    We denote $S_I=\argmin_{S\in\gS}{\gL(I,S)}$ as an \textbf{optimal sketch representation} of $I$.
\end{definition}

\paragrapht{\Geofeature{}.} 
We denote $\gG$ as a group of geometric transformations that acts on points in $\Omega$ (see Appendix~\ref{appendix:proof} for definitions).
For example, $\gG$ could be a group of translation or rotation.
$\rho$ is a group representation of $g\in\gG$ such that $\rho(g)$ is a linear operator which acts on $\gI$, \ie, $[\rho(g)I](u)=I(g^{-1}u)$ for $u\in\Omega$.

\begin{definition}
\label{def:geofeature}
$f(I)$ is a \textbf{\geofeature} of $I$ with respect to $\gG$
if there exists a group representation $\rho'$ such that $f(\rho(g)I)=\rho'(g)f(I)$ for $\forall I\in\gI$ and $\forall g\in\gG$.
\end{definition}

In this case, we say $f$ is a \textbf{$\gG$-equivariant map}, and the symmetry of $\gG$ is preserved by $f$.
For example, a convolution operator is a translation-equivariant map since the feature of the shifted image is equal to the shifted feature of the original image.
In other words, the convolution operator preserves the symmetry of the translation and produces a \geofeature{} with respect to it.

\subsection{Geometry-aware representations by sketching} \label{subsec:proposition}
We now show that stroke representation is a \geofeature{} with respect to arbitrary affine transformations.
Refer to Appendix \ref{appendix:proof} for detailed proofs.

\begin{proposition}
\label{prop:proposition}
If $\gL(\rho(g)I, \rho(g)S) = \lambda(g)\gL(I, S)$ where $\lambda: \gG \rightarrow \sR^+$ exists, and there exists a unique $f^\ast$ such that $[r\circ f^\ast](I) = S_I$ for $\forall I\in\gI$, then $f^\ast(I)$ is a \geofeature{} w.r.t. affine transformations $\gA$.
\end{proposition}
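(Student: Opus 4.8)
The plan is to transfer the scaling symmetry of the loss $\gL$ from the level of sketches down to the level of strokes, using the uniqueness of $f^\ast$ as the bridge. First I would establish that the optimal sketch representation is itself equivariant, namely $S_{\rho(g)I}=\rho(g)S_I$ for every affine $g\in\gA$. The idea is a change of variables in the minimization defining $S_I$ (\Cref{def:optimalrep}). For $g\in\gA$ the operator $\rho(g)$ restricts to a bijection of $\gS$ onto itself; this is the one genuinely geometric input, and it rests on the fact that an affine map sends a B\'ezier curve to a B\'ezier curve (affine maps commute with affine combinations of control points), so the rendered image of any stroke set remains a valid sketch. Granting this, substituting $S=\rho(g)S'$ into $\argmin_{S\in\gS}\gL(\rho(g)I,S)$ and invoking the hypothesis $\gL(\rho(g)I,\rho(g)S')=\lambda(g)\gL(I,S')$ with $\lambda(g)>0$ shows the minimizer over $S'$ is unaffected by the positive rescaling, so $\argmin_{S'}\gL(\rho(g)I,\rho(g)S')=S_I$ and hence $S_{\rho(g)I}=\rho(g)S_I$.

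Second, I would define the stroke-level representation $\rho'$. For $g\in\gA$ and a stroke $p=(t,c,w)$, let $\rho'(g)$ push the control points forward, $t\mapsto g\cdot t$, carrying color and thickness along, and extend it to a stroke set $\vp$ componentwise. The key compatibility to verify is that rendering intertwines the two actions, $r(\rho'(g)\vp)=\rho(g)\,r(\vp)$, which is again the affine-covariance of B\'ezier rendering. One also checks that $\rho'$ is a group representation, so that $\rho'(g)^{-1}=\rho'(g^{-1})$.

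Finally, I would close the argument with the uniqueness hypothesis, which lets me avoid requiring global injectivity of $r$. Define $\phi(I):=\rho'(g)^{-1}f^\ast(\rho(g)I)$. Using rendering equivariance and the first step,
\[
r(\phi(I)) = \rho(g)^{-1}\, r\!\big(f^\ast(\rho(g)I)\big) = \rho(g)^{-1} S_{\rho(g)I} = \rho(g)^{-1}\rho(g)\, S_I = S_I
\]
for every $I\in\gI$, so $\phi$ is another map whose composition with $r$ realizes the optimal sketch representation. By the assumed uniqueness of $f^\ast$ we conclude $\phi=f^\ast$, which rearranges to $f^\ast(\rho(g)I)=\rho'(g)f^\ast(I)$ --- precisely the statement that $f^\ast(I)$ is a \geofeature{} with respect to $\gA$ (\Cref{def:geofeature}).

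The main obstacle I anticipate is not the algebra but the two geometric closure facts the argument leans on: that $\rho(g)$ genuinely preserves $\gS$, and that rendering commutes with $\rho'(g)$ for all affine $g$. These require the affine-covariance of B\'ezier curves together with care about how affine maps interact with the homogeneous (wrap-around) structure assumed on $\Omega$ and with the thickness parameter $w$, which an area-changing affine map may rescale. I would therefore isolate these as a preliminary lemma on the rendering operator $r$ before running the short $\argmin$/uniqueness argument above.
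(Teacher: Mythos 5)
Your proposal is correct and shares the paper's overall skeleton---sketch-level equivariance $S_{\rho(g)I}=\rho(g)S_I$ obtained by a change of variables in the $\argmin$ (the paper's \cref{lemma:sketch}), plus affine covariance of B\'ezier rendering (the paper's \cref{lemma:renderer})---but your final descent from sketches to strokes is genuinely different. The paper's \cref{lemma:stroke} concludes $f^\ast(\rho(a)I)=\rho'(a)f^\ast(I)$ from $r(f^\ast(\rho(a)I))=r(\rho'(a)f^\ast(I))$ by assuming $r$ is injective on the range of $f^\ast$; strictly, that condition only covers points of the form $f^\ast(I')$, so one must additionally know that $\rho'(a)f^\ast(I)$ lies in that range (or that injectivity extends to its $\rho'$-orbit). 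You instead construct the competing map $\phi=\rho'(g)^{-1}\circ f^\ast\circ\rho(g)$, verify $r(\phi(I))=S_I$ for all $I$, and invoke the uniqueness of $f^\ast$ exactly as stated in the proposition's hypothesis. This buys you two things: it uses the hypothesis literally rather than through the paper's surrogate injectivity condition, and it sidesteps the range issue entirely. What the paper's route buys is modularity: \cref{lemma:stroke} applies to any $\gA$-equivariant, suitably injective renderer without reference to uniqueness of $f^\ast$. Your closing caveats are also well placed: the paper's \cref{lemma:renderer} silently passes $a$ inside the Euclidean norm (replacing $\|C(s,p)-a^{-1}u\|$ by $\|a.C(s,p)-u\|$), which is valid only when the linear part of $a$ is orthogonal; for general affine maps the thickness $w$ must be transported along with the control points, precisely the point you propose to isolate in a preliminary lemma on $r$.
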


\cref{prop:proposition} states the conditions for $f$ to be an $\gA$-equivariant map.
Therefore, by designing stroke representations to be injective for $r$, and optimizing $f$ to minimize $\gL$ subject to the conditions outlined in \cref{prop:proposition},
$f$ generates a \geofeature{} with respect to arbitrary affine transformations.
In the next section, we propose an architecture and a loss function that satisfy these conditions.

\section{Method}
\label{sec:method}
We describe our model \model{}, which provides geometry-aware representations from strokes by learning to generate a sketch in a short inference time without using a sketch dataset.
We introduce an objective function and techniques for training \model{}.
We then propose a stroke embedding network that aggregates information of whole strokes.
Hereafter, we use the terms ``sketch'' and  ``image'' to denote realized sketch $S(\Omega)$ and realized image $I(\Omega)$ for simplicity.

\subsection{Learning by Sketching (LBS)} \label{subsec:lbs}
\model{} aims to generate a set of strokes to sketch the image and extract geometric information with color for downstream tasks.
To achieve the goal, we design a stroke generator by referring to Transformer-based sketch generation models~\cite{sketchformer, paintTransformer}.

\paragrapht{Architecture.}
As shown in \cref{fig:3}, \model~generates a set of strokes from trainable parameters, which are denoted as initial strokes.
Our stroke generator uses Transformer decoder layers and incorporates image features from a CNN encoder (ResNet18 is used in this paper).
Then, 2-layer MLP decoders decode a set of $n$ strokes from the output of the Transformer decoder.
Each stroke contains four control points that parameterize a cubic Bézier curve, as well as color and thickness information.
The sketch of an image is then created by rasterizing the generated set of strokes using a differentiable rasterizer~\cite{renderer}.
For a detailed description of our architecture, please refer to Appendix \ref{appendix:implementation}.

\paragrapht{Representation of \model.}
The primary output of \model{} is a set of strokes $\vp$, and a geometry-aware representation is obtained by using the flattened vector of the strokes $z_p=(p^{(1)}, ..., p^{(n)})$.
To complement information that may not be encoded with strokes such as texture,
we concatenate $z_p$ with the image features $z_e$ from the CNN encoder with global average pooling.
The resulting concatenated representation is denoted as $z_{LBS}$.
In addition, we can combine this with representation $z_h$ that aggregates the entire stroke, the result of which we denote as $z_{LBS+}$. Further details can be found in \cref{subsec:embed}.
The final representation space of \model~is as follows:
  \begin{equation}
    z_{LBS} = (z_e, z_p),\quad z_{LBS+} = (z_e, z_p, z_h).
  \end{equation}

\subsection{CLIP-based perceptual loss function}
As discussed in \cref{subsec:proposition}, proper $\gL$ must be given to train a geometry-aware representation of an image.
To this end, we adopt CLIP-based perceptual loss to measure the similarity between the sketch and the input image in terms of both geometric and semantic differences~\cite{clipasso}.
To generate a sketch that is \textit{semantically similar} to the input image, it should minimize the distance on the embedding space of the CLIP model $f_{C}$ as follows:
\begin{equation}
\label{eq:loss_semantic}
\gL_{semantic} = \sum_{A\in\gA} \phi (A(I), A(S)),
\end{equation}
where $\gA$ is the group of affine transformations and $\phi$ is the cosine similarity of the CLIP embeddings, \ie, $\phi(x, y)= \cos (f_C(x), f_C(y))$.
In addition, it compares low-level features of the sketch and the input image to estimate the \emph{geometric similarity} as follows:
\begin{equation}
\label{eq:loss_geometric}
\gL_{geometric} = \sum_{A\in\gA}\sum_{i} \phi_i(A(I), A(S)),
\end{equation}
where $\phi_i$ is the $\gL_2$ distance of the embeddings from the $i$-th intermediate layer of the CLIP model, \ie, $\phi_i(x, y) = \|f_{C, i}(x)-f_{C, i}(y)\|_2^2$.
Specifically, we used $i\in\{3, 4\}$ with the ResNet101 CLIP model, proposed as \cite{clipasso}.
The final perceptual loss is as follows: 
\begin{equation}
\label{eq:loss_perceptual}
\gL_{{percept}} = \gL_{geometric} + \lambda_s\cdot\gL_{semantic},
\end{equation}
with $\lambda_s=0.1$ as a hyperparameter.

\begin{figure}[t]
    \centering
    \includegraphics[width=0.99\columnwidth]{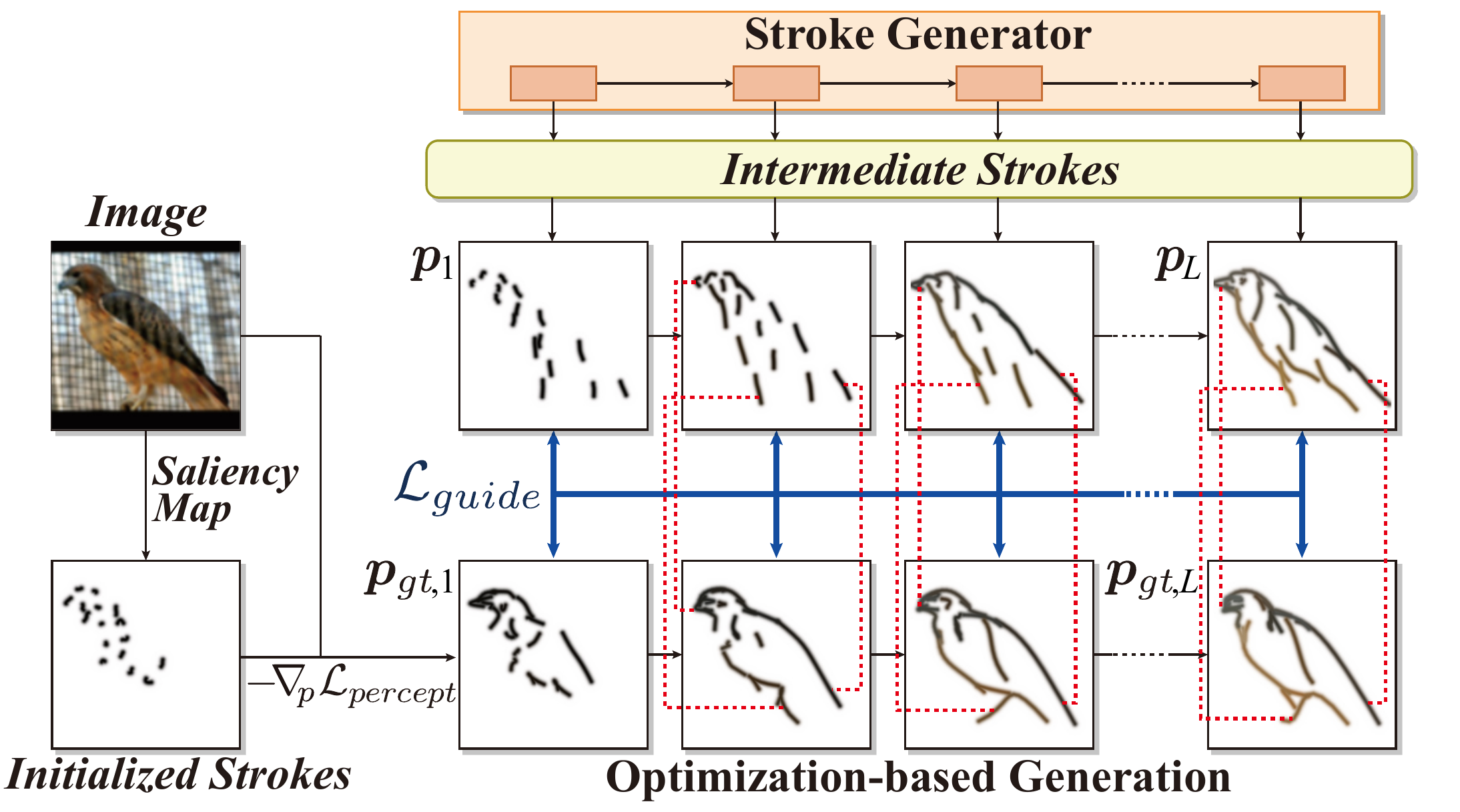}
    \setlength{\belowcaptionskip}{-8pt}  
    \vspace*{-1em}
    \caption{Visualization of the progressive optimization process. 
    Strokes initialized from the saliency map are optimized using the gradient of $\gL_{percept}$ until convergence and stored in a total of $L$ intermediate steps denoted by $\vp_{gt,l}$.
    Then, the stroke generator is trained through $\gL_{guide}$ to progressively predict each intermediate stroke $\vp_l$ from $\vp_{gt,l}$.
    }
    \label{fig:4}
\end{figure}

\subsection{Training with progressive optimization process} \label{subsec:progressive}
Abstracting an image into a sketch requires high-level reasoning,
and predicting the corresponding stroke parameters in a single inference step is a delicate process.
Although CLIPasso~\cite{clipasso} addresses these problems via an optimization-based approach,
its time-consuming properties ($\sim$2 min per image) make it impractical as a representation.
Moreover, as CLIPasso reported that the changes in initial strokes significantly impacted the final results,
directly optimizing each stroke through the optimizer for thousands of steps leads to highly variable results and numerical instability.

To generate a stable representation within a single inference step, we modify the optimization-based approach of \textit{CLIPasso}
and utilize it as desirable guidance for the stroke generator, which we call the guidance stroke and denote as $\vp_{gt}$.
To ensure high-quality sketch generation, each intermediate layer of the stroke generator is guided by strokes obtained at the corresponding intermediate step of the optimization process instead of relying solely on the final outcome.
This process, depicted in \cref{fig:4}, involves progressively updating strokes from the previous layer, which can be modeled by residual blocks from the Transformer architecture. 
By decomposing the prediction of the entire optimization process into predicting changes in relatively short intervals for each intermediate layer, our approach leads to better-quality sketches.

However, the process through which CLIPasso initializes strokes via stochastic sampling from the saliency map results in unstable outcomes.
To address this issue, we select initialized strokes deterministically by selecting the most salient regions from the saliency map in a greedy manner.
The color of the stroke is determined during the initialization step by referencing the corresponding region in the original image.
More details are described in Appendix \ref{appendix:optim}.

To ensure that the guidance loss is invariant to the permutation of the order of each stroke,
we use the Hungarian algorithm~\cite{hungarian} which is widely used for the assignment problem. 
The guidance loss is as follows:
\begin{align}
  \gL_{guide}=\sum_{l=1}^L \min_{\sigma} \sum_{i=1}^n \gL_1(p_{gt,l}^{(i)}, p_l^{(\sigma(i))}),
\end{align}
where $\sigma$ is a permutation of the stroke index, and $p^{(i)}_l$ is the $i$-th decoded stroke from the $l$-th intermediate layer with $L$ layers in total. $\gL_1$ is the L1 distance, and $p^{(i)}_{gt,l}$ is the guidance stroke corresponding to $p^{(\sigma(i))}_l$.

\subsection{Stroke embedding network} \label{subsec:embed}
As strokes primarily express localized and fine-grained details, they can be less suitable for representing coarse-grained information.
To address this, we offer an additional representation that combines full strokes through a stroke embedding network.
Stroke embedding network $h:\gP\rightarrow\gZ_{h}$ maps the set of $n$ strokes into embedding space $\gZ_{h}$.
Since the rasterized sketch is invariant to the stroke indices (except for contact point between strokes),
we followed the permutation invariant model from DeepSets~\cite{deepSets} to generate stroke embedding.
The choice of the loss function $\gL_{embed}$ to train $z_h\in\gZ_h$ is unconstrained (\eg, Cross-Entropy or InfoNCE loss in our paper). 
During the training phase, the gradient from $h$ is also passed to $f$ to capture coarse-grained information.

The final objective is as follows:
\begin{equation} \label{eq:loss_total}
  \gL_{LBS} = \gL_{{percept}} + \lambda_g \cdot \gL_{guide} + \lambda_e \cdot \gL_{embed},
\end{equation}
where $\lambda_g$ and $\lambda_e$ are hyperparameters (for specific values, refer to Appendix \ref{appendix:implementation}).

\section{Experiments}
\label{sec:exp}
\subsection{Research questions and task designs}
We investigate the effectiveness and generalizability of our approach by designing multiple research questions, as well as corresponding downstream tasks for each question.
Detailed information about each experimental setup is described in Appendix \ref{appendix:setup}.

\vspace*{0.2em}
\paragrapht{RQ1. Effects of equivariance:}
\textit{What is the benefit of equivariant representation, and does LBS achieve it?}

From the definition of equivariance, we expect that an equivariant model with respect to geometric transformation would have generalization capability for augmented images with transformation. 
To verify this, we conduct experiments on the Rotated MNIST (rotMNIST) dataset~\cite{rotmnist}, which consists of rotated images from the MNIST dataset.
We rotate images by 30$^\circ$ and 45$^\circ$ for training and classify the digit of images rotated by 0$^\circ$ and 90$^\circ$ to evaluate whether \model{} generalizes the information from the training domain to the test domain. 
To verify that our model satisfies equivariance outside the training domain, we classify the rotation and reflection between an original image and the same image rotated with an interval of 90$^\circ$ and horizontally flipped.

\vspace*{0.2em}
\paragrapht{RQ2. Effects on geometric primitives and concepts:}
\textit{How well does \model{} represent geometric primitives and understand their conceptual relationships?}

To evaluate whether \model{} is suitable for representing geometric primitives and concepts, we perform a classification task on the Geoclidean dataset~\cite{geoclidean}.
The Geoclidean dataset consists of realized images from a concept of Euclidean geometry, \eg, black parallel lines on a white background.
Geoclidean is divided into two categories: Geoclidean-Elements and Geoclidean-Constraints, visualized in \cref{figure:geoclidean} on Appendix \ref{appendix:setup}.
By providing a limited training set consisting of only 10 images per concept, we evaluate whether \model{} can effectively learn the high-level relationships between each primitive and generalize them across different examples by classifying the concept of the test image.

\begin{table}[t]
    \centering
    \caption{Quantitative results of \model~and the baselines evaluated on the corresponding tasks on the rotMNIST and Geoclidean.
    }
    \begin{adjustbox}{width=\linewidth}
    \begin{tabular}{clcc|cc}
    \toprule
    \multirow{2}{*}{\begin{tabular}[c]{@{}c@{}}La-\\bel\end{tabular}} & \multirow{2}{*}{Method} & \multicolumn{2}{c}{\begin{tabular}[c]{@{}l@{}}rotMNIST (30$^\circ$, 45$^\circ$)\end{tabular}} & \multicolumn{2}{c}{Geoclidean} \\ \cmidrule{3-6} 
                           & & 0$^\circ$, 90$^\circ$                    & Rotation                   & Constraints               & Elements                  \\ 
    \midrule
    \multirow{5}{*}{\cmark}
    & CE                                      &  62.72\stdv 1.89                  & 73.14\stdv 0.35                   & 53.89\stdv 1.58                   & 70.57\stdv 4.29                   \\
    & SupCon \cite{supcon}                    &  \grayc 65.25\stdv 1.05           & \grayc 73.37\stdv 0.20            &  \grayc 42.41\stdv 3.16           & \grayc 55.83\stdv 4.28            \\ 
    & LtD-diff  \cite{learningToDraw}         &  60.09\stdv 1.16                  & 28.91\stdv 2.15                   & 57.26\stdv 2.19                   & 69.47\stdv 2.11                   \\
    & \text{E(2)}-CNN~\cite{e2equivariant}   &  \grayc \textbf{85.12}\stdv 0.17  & \grayc \textbf{77.00}\stdv 0.39   & \grayc \textbf{71.03}\stdv 1.94   & \grayc 69.28\stdv 1.46            \\
    & \textbf{\model~(CE)}                                &  65.00\stdv 3.32                  & 75.81\stdv 0.30                   & 50.01\stdv 1.58                   & \textbf{81.06}\stdv 3.14          \\ \midrule
    \multirow{5}{*}{\xmark} & SimCLR \cite{simclr}                    & \grayc \textbf{47.66}\stdv 0.41   & \grayc \textbf{72.96}\stdv 0.84   & \grayc 32.04\stdv 0.64            & \grayc 65.14\stdv 4.11           \\
    & $\beta$-TCVAE \cite{betaTCVAE}          & 37.84\stdv 1.06                   & 60.14\stdv 0.32                   & 17.18\stdv 1.35                   & 33.82\stdv 1.64                   \\
    & GeoSSL \cite{geoSSL}                    & \grayc  -                         & \grayc -                          & \grayc 18.66\stdv 3.33            & \grayc 33.47\stdv2.80           \\
    & HoG \cite{HoG}                          & 25.42                             & 64.94                             & 23.82                             & 52.05                             \\
    & \textbf{\model}                                    & \grayc 47.31\stdv 2.53            & \grayc 70.02\stdv 0.64            & \grayc \textbf{47.43}\stdv 1.34   & \grayc \textbf{81.34}\stdv 0.16   \\ 
    \bottomrule
    \end{tabular}
    \end{adjustbox}
    \label{table:1}
    \vspace*{-1em}
\end{table}

\begin{table*}[t]
    \centering
    \caption{Quantitative results on CLEVR dataset for labeled and unlabeled settings and for CLIP and HoG features. RC, LC, BC refer to inferring the color of the rightmost, leftmost, bottommost object, and Size, Shape, Material refer to inferring the size, shape, material of the rightmost object, respectively.
    Third refers to inferring the color of the third object from the right,
    and Shift refers to predicting the color of the rightmost object after shifting the initial rightmost object to the left by 0.15 times the image width.
    }
    \begin{adjustbox}{width=0.9\textwidth}
    \begin{tabular}{clcccccccc}
    \toprule
    Label                 & Method            & RC    & LC    & BC    & Size  & Shape & Material & Third & Shift \\ \midrule  
    \multirow{6}{*}{\cmark}
    & CE                                        & 98.71\stdv{0.10}          & 76.72\stdv{0.86}          & 76.02\stdv{0.90}          & 92.51\stdv{0.40}          & 49.97\stdv{0.29}           & 64.38\stdv{1.90}             & 40.66\stdv{0.20}          & \textbf{62.06}\stdv{1.62} \\
    & SupCon  \cite{supcon}                     & \textbf{98.75}\stdv{0.08} & 63.82\stdv{2.36}          & 66.04\stdv{2.65}          & 91.88\stdv{0.37}          & 49.15\stdv{0.85}           & 59.20\stdv{0.57}             & 37.93\stdv{0.27}          & 56.05\stdv{2.56}          \\
    & LtD-diff  \cite{learningToDraw}           & 62.29\stdv{0.48}          & 14.01\stdv{0.42}          & 15.84\stdv{0.43}          & 63.98\stdv{3.38}          & 43.96\stdv{3.05}           & 54.47\stdv{0.45}             & 16.47\stdv{0.59}          & 17.21\stdv{0.29}          \\
    & E(2)-CNN~\cite{e2equivariant}        & 98.50\stdv{0.10}          & 70.10\stdv{1.15}          & 73.51\stdv{2.50}          & 89.84\stdv{0.46}          & 45.85\stdv{0.93}           & 63.05\stdv{0.59}             & \textbf{41.95}\stdv{0.18} & 59.29\stdv{0.91}          \\
    & \textbf{\model~(CE)}                                 & 97.49\stdv{0.22}          & \textbf{81.79}\stdv{0.27} & \textbf{84.09}\stdv{0.84}          & \textbf{93.22}\stdv{0.29} & \textbf{70.03}\stdv{0.68}  & \textbf{86.84}\stdv{0.54}    & 38.23\stdv{0.25}          & 51.56\stdv{0.16}          \\
    \midrule
    
    \multirow{6}{*}{\xmark}
    & SimCLR \cite{simclr}                      & 60.61\stdv{1.24}          & 63.89\stdv{2.35}          & 63.77\stdv{2.29}          & 83.35\stdv{0.60}          & 41.95\stdv{0.33}          & 51.66\stdv{0.24}          & 33.42\stdv{0.55}          & 43.05\stdv{0.55}          \\
    & E(2)-CNN~\cite{e2equivariant}    & 53.50\stdv{7.30}          & 54.60\stdv{7.34}          & 55.52\stdv{7.60}          & 83.52\stdv{1.56}          & 42.06\stdv{1.12}          & 53.04\stdv{2.81}          & 30.74\stdv{2.84}          & 38.03\stdv{4.44}          \\
    & $\beta$-TCVAE  \cite{betaTCVAE}           & 17.09\stdv{0.20}          & 17.50\stdv{0.33}          & 20.04\stdv{0.71}          & 71.27\stdv{0.10}          & 36.30\stdv{0.10}          & 56.11\stdv{0.13}          & 15.38\stdv{0.19}          & 16.35\stdv{0.18}          \\
    & GeoSSL \cite{geoSSL}                      & 20.16\stdv{0.63}          & 20.54\stdv{0.87}          & 21.61\stdv{0.67}          & 73.79\stdv{0.78}          & 44.08\stdv{1.10}          & 54.54\stdv{2.94}          & 15.39\stdv{0.16}          & 16.94\stdv{0.34}          \\
    & DefGrid \cite{grid}                      & 73.81\stdv{0.91}          & 73.96\stdv{1.02}          & 73.38\stdv{0.80}          & 81.50\stdv{0.22}          & 46.34\stdv{0.77}          & 68.65\stdv{1.17}          & 24.90\stdv{0.27}          & 36.28\stdv{0.13}          \\
    & \textbf{\model}                                      & \textbf{84.31}\stdv{0.08} & \textbf{80.47}\stdv{0.78} & \textbf{83.00}\stdv{0.39}          & \textbf{92.66}\stdv{0.41}          & \textbf{70.01}\stdv{0.53}          & \textbf{85.52}\stdv{0.43}          & \textbf{37.41}\stdv{0.29}          & \textbf{49.32}\stdv{0.17} \\
    \midrule

    \multirow{2}{*}{}
    & CLIP \cite{clip}                          & 37.39 & 39.1  & 54.98 & 77.51 & 66.91 & 72.66    & 34.75 & 34.80       \\
    & HoG \cite{HoG}                            & 56.83 & 49.69 & 58.69 & 81.73 & 61.14 & 68.38    & 24.28 & 33.25       \\
\bottomrule
\end{tabular}
\end{adjustbox}
\label{table:2}
\end{table*}

\begin{table*}[t]
    \centering
    \caption{Quantitative results of models trained on STL-10 dataset and evaluated on CLEVR dataset.}
    \begin{adjustbox}{width=0.9\textwidth}
    \begin{tabular}{clcccccccc}
    \toprule
    Dataset                 & Method            & RC    & LC    & BC    & Size  & Shape & Material & Third & Shift \\ \midrule
\multirow{6}{*}{\begin{tabular}[c]{@{}c@{}}STL-10\\ $\downarrow$\\ CLEVR\end{tabular}}
& LtD \cite{learningToDraw}                 & 18.70\stdv{0.02}          & 19.15\stdv{0.77}          & 21.69\stdv{1.05}          & 73.13\stdv{3.21}          & 47.88\stdv{1.13}          & 59.11\stdv{2.34}          & 16.47\stdv{0.59}          & 17.21\stdv{0.29}          \\
& SimCLR \cite{simclr}                      & 26.97\stdv{0.28}          & 26.26\stdv{0.88}          & 38.88\stdv{0.19}          & 82.33\stdv{0.64}          & 56.88\stdv{1.23}          & 77.39\stdv{0.35}          & 23.45\stdv{0.35}          & 24.95\stdv{0.17}          \\
& E(2)-CNN~\cite{e2equivariant}    & 26.82\stdv{2.23}          & 25.40\stdv{1.20}          & 39.30\stdv{2.80}          & 79.48\stdv{0.71}          & 53.54\stdv{0.84}          & 72.18\stdv{0.34}          & 23.35\stdv{1.01}          & 24.80\stdv{1.08}          \\
& $\beta$-TCVAE  \cite{betaTCVAE}           & 17.04\stdv{0.76}          & 18.46\stdv{2.36}          & 18.88\stdv{1.14}          & 71.32\stdv{0.67}          & 37.14\stdv{0.12}          & 56.81\stdv{0.26}          & 15.63\stdv{1.35}          & 16.43\stdv{0.38}          \\
& GeoSSL \cite{geoSSL}                      & 14.41\stdv{0.55}          & 14.73\stdv{0.11}          & 14.37\stdv{0.66}          & 77.47\stdv{5.17}          & 48.10\stdv{1.90}          & 55.63\stdv{2.86}          & 13.71\stdv{0.45}          & 13.63\stdv{0.13}          \\
& DefGrid \cite{grid}                      & \textbf{66.74}\stdv{1.10}          & \textbf{68.60}\stdv{0.08}          & \textbf{68.35}\stdv{0.82}          & 78.97\stdv{0.29}          & 40.92\stdv{0.44}          & 62.98\stdv{1.31}          & 23.48\stdv{0.49}          & 33.07\stdv{0.52}          \\
& \textbf{\model}                                      & 53.87\stdv{2.72}          & 53.85\stdv{2.20}          & 61.84\stdv{2.23}          & \textbf{92.46}\stdv{0.62} & \textbf{59.52}\stdv{0.45}          & \textbf{80.06}\stdv{1.29}          & \textbf{30.75}\stdv{0.41}          & \textbf{39.95}\stdv{1.15}          \\
\bottomrule
\end{tabular}
\end{adjustbox}
\vspace*{-0.5em}
\label{table:3}
\end{table*}

\begin{table}[t]
    \centering
    \caption{Quantitative results of models trained on CLEVR dataset and evaluated on STL-10 dataset classification.}
    \begin{adjustbox}{width=0.99\linewidth}
    \begin{tabular}{clc|lc}
    \toprule
    & \multicolumn{2}{c|}{\begin{tabular}[c]{@{}l@{}}Labeled\end{tabular}} & \multicolumn{2}{c}{\begin{tabular}[c]{@{}l@{}}Unlabeled\end{tabular}} \\
    Dataset                 & Method            & Accuracy & Method & Accuracy \\ \midrule
    \multirow{6}{*}{\begin{tabular}[c]{@{}c@{}}CLEVR\\ $\downarrow$\\ STL-10\end{tabular}}
    & CE                                        & 46.15\stdv{0.12} & SimCLR \cite{simclr}                      & 41.68\stdv{0.05} \\
    & SupCon  \cite{supcon}                     & 43.41\stdv{0.36}\grayc & $\beta$-TCVAE  \cite{betaTCVAE}           & 27.35\stdv{0.38}\grayc \\
    & LtD-diff  \cite{learningToDraw}           & 50.81\stdv{0.67} & GeoSSL \cite{geoSSL}                      & 35.93\stdv{0.96} \\
    & E(2)-CNN~\cite{e2equivariant}        & 45.19\stdv{0.84}\grayc & E(2)-CNN~\cite{e2equivariant}    & 38.50\stdv{0.49}\grayc \\
    & \textbf{\model~(CE)}                                  & \textbf{56.48}\stdv{0.89}  & DefGrid \cite{grid}                      & 33.13\stdv{0.17} \\
    & & & \textbf{\model}                                      & \textbf{55.35}\stdv{0.18}\grayc \\
    \bottomrule
\end{tabular}
\end{adjustbox}
\label{table:4}
\vspace*{-1em}
\end{table}

\vspace*{0.2em}
\paragrapht{RQ3. Local geometric information and spatial reasoning:}
\textit{How effectively does \model{} reflect local geometric information and extend it to spatial reasoning tasks?}

We validate whether \model{} can reflect the local geometric information of each object in a synthetic photo dataset consisting of multiple objects, using the CLEVR dataset~\cite{clevr}.
We train our model with very limited descriptions, where the label for the entire scene is provided as the rightmost object or without any descriptions at all. 
We validate the effectiveness of our representation by evaluating its ability to successfully classify attributes that are not provided as labels, such as determining the color of the leftmost object.
Additionally, we test its ability to perform simple spatial reasoning, such as shifting the rightmost object and inferring the attribute of the current rightmost object.

\vspace*{0.2em}
\paragrapht{RQ4. Domain transfer:}
\textit{Can the geometric concepts of \model{} trained in a specific domain be extended to other domains?}

To investigate whether the learned representation 
within a specific image domain provides meaningful geometric information across other domains,
we evaluate the model by shifting the distribution from the STL-10~\cite{stl10} dataset to CLEVR and vice versa.
Evaluation in the STL-10 dataset is conducted through object classification.

\vspace*{0.2em}
\paragrapht{RQ5. Traditional sketch tasks:}
\textit{Does geometry-aware representation from \model{} help in traditional FG-SBIR tasks?}

We assess the impact of geometry-aware representations acquired by \model{} on a conventional sketch task by evaluating them with the fine-grained sketch-based image retrieval (FG-SBIR) task on the QMUL-ShoeV2 Dataset~\cite{retrieval1}.
The aim of FG-SBIR is to match a given sketch of a shoe to its corresponding photo image.
Since geometric information is essential for distinguishing each shoe, 
we investigate whether leveraging representations from \model{} can improve performance on the FG-SBIR task.

\subsection{Experimental setup}

\paragrapht{Implementation details.}
As samples from rotMNIST and Geoclidean can be directly represented with a few strokes,
we replace $\gL_{percept}$ and $\gL_{guide}$ with $\gL_1$ loss, which does not rely on a pre-trained CLIP model.
For the STL-10 dataset, providing mask information for the background with U2-Net~\cite{u2net} improves the quality of generated sketches.
For $\gL_{embed}$, we use Cross-Entropy loss for supervised training which we denote as \model~(CE) in \cref{table:1}, \ref{table:2} and \ref{table:4}.
For the unlabeled setting, the model is trained without $\gL_{embed}$ and is denoted \model~with no parentheses.
For evaluation, we use $z_{LBS+}$ for \model~(CE), and $z_{LBS}$ for \model{}.
All evaluations are performed with the common protocol of linear classification.
For more detailed information on our implementation, please refer to Appendix \ref{appendix:implementation}.

\paragrapht{Baselines.}
We compare our models with:
\vspace{-0.5em}
\begin{itemize}
\setlength\itemsep{-0.4em}
    \item Models which only use image encoder:
    trained with Cross-Entropy, SimCLR~\cite{simclr}, or SupCon loss~\cite{supcon}.
    \item Equivariance models: $\text{E(2)}$-Equivariant Steerable CNNs ($\text{E(2)}$-CNN)~\cite{e2equivariant}, and work by Novotny \etal, denoted as GeoSSL~\cite{geoSSL}.
    \item Disentanglement model: 
    $\beta$-TCVAE~\cite{betaTCVAE}.
    \item Sketch model for communication between agents: Learning to Draw (LtD)~\cite{learningToDraw} 
    \item Grid-based superpixel model: DefGrid\cite{grid}
    \item Handcrafted feature descriptor: HoG~\cite{HoG}.
    \item Pre-trained CLIP model used for training \model{}.
\end{itemize}

\subsection{Results and analysis}

\paragrapht{Analysis on RQ1 and RQ2:}
\cref{table:1} shows the experimental results for \RQ{1} and \RQ{2}, which can be summarized as: 
\textbf{(a)} Providing a strong inductive bias for equivariance, as shown in the case of $\text{E(2)}$-CNN, leads to good performance on the tasks of \RQ{1}. 
While \model{} outperformed LtD-diff, which also utilizes a sketch-based approach, it requires additional validation and refinement.
\textbf{(b)} For \RQ{2}, \model{} performs substantially better than the other baselines except for the supervised setting on Geoclidean-Constraints, showing that it successfully captures geometric concepts. 
In particular, Geoclidean-Elements perform better when trained without labels.
\textbf{(c)} Disentanglement methods and handcrafted features offer less informative features for both tasks.

\vspace{0.2em}
\paragrapht{Analysis on RQ3:}
\cref{table:2} summarizes the experiments conducted on the CLEVR dataset, demonstrating that \model{} provides valuable information for tasks of \RQ{3}.
The key observations from \cref{table:2} are:
\textbf{(a)} While the baselines in the supervised setting struggle to generalize on unlearned attributes, \model~(CE) can provide more general information.
\textbf{(b)} The results of \model{} show a larger performance gap in the unlabeled setting, with its performance being almost as good as \model~(CE).
\textbf{(c)} Embeddings from pre-trained CLIP perform less effectively than unlabeled baselines, such as SimCLR, on tasks related to local geometry. 
Also, handcrafted features yielded limited effectiveness.
\textbf{(d)} Methods that focus on preserving global transformations and disentanglement are not effective in environments of \RQ{3}.

\vspace{0.2em}
\paragrapht{Analysis on RQ4:}
\cref{table:3,table:4} summarize the results for shifting domains from STL-10 to CLEVR, and vice versa.
As shown in \cref{figure:Transfer} on Appendix \ref{appendix:qualitative}, \model{} trained on STL-10 can successfully describe scenes from CLEVR, while \model{} trained on CLEVR can abstractly depict natural images from STL-10, leading to substantially high classification accuracy.
These results indicate that learning geometric information, such as position and curvature, by sketching salient parts of a scene successfully induces learning of general features across domains.
Furthermore, LtD-diff, which also utilizes a sketch-based approach, demonstrated good performance in transferring CLEVR to STL-10.

\vspace{0.2em}
\paragrapht{Analysis on RQ5:}
\cref{table:fgsbir} presents the results for the FG-SBIR task. 
Compared to the baseline trained without sketching, \model{} can accurately predict the corresponding shoe during the test phase.
The huge drop in performance upon removing strokes and stroke embedding from $z_{LBS+}$ indicates that the geometry-aware representation provided by \model{} is also valuable for traditional sketch tasks.

\subsection{Comparisons with stroke-based methods} \label{subsec:style}
\cref{table:5} compares \model~to other stroke-based generation methods on STL-10 dataset, and \cref{figure:sketh_method} visualizes the results of each method. 
The evaluation is based solely on strokes, and \model only uses $z_p$ as a representation.
\model~($\gL_{percept}$) achieved superior results compared to \model~($\gL_1$) and the Paint Transformer~\cite{paintTransformer} based on $\gL_1$ loss.
We also compare our results to CLIPasso modified to use our deterministic initialization process, suggesting that strokes generated from CLIPasso may not be suitable as a representation.
LtD-diff, which generates sketches to solve a communication game, often omits important geometric and color information, resulting in low performance.

\begin{table}[t]
    \centering
    \caption{Top-1 and Top-10 accuracy on QMUL-Shoe-v2 dataset, using Triplet Loss for $\gL_{embed}$ in \model{}.
    }
    \vspace*{-0.2em}
    \begin{adjustbox}{width=0.7\linewidth}
    \begin{tabular}{lccc}
        \toprule
        Method & Rep & Top-1 & Top-10 \\ \midrule
            \multirow{2}{*}{\model~(Triplet)} & $z_{LBS+}$ & 40.8\stdv{4.19} & 88.3\stdv{1.76} \\
            & $z_e$ & 30.8\stdv{3.89}\grayc & 77.8\stdv{3.89}\grayc \\
            Triplet & $z_e$ & 32.5\stdv{1.73} & 80.0\stdv{4.36} \\
        \bottomrule
    \end{tabular}
    \end{adjustbox}
    \label{table:fgsbir}
    \vspace*{-0.5em}
\end{table}

\begin{table}[t]
    \centering
    \caption{Comparison between stroke-based generation methods.}
    \vspace*{-0.3em}
    \begin{adjustbox}{width=0.9\linewidth}
    \begin{tabular}{lccccc}
        \toprule
                                            & RC                & BC                & Size              & Shape             & Material          \\ \midrule
        LtD-diff  \cite{learningToDraw}     & 18.70             & 21.69             & 73.13             & 47.88             & 59.11             \\
        Paint~\cite{paintTransformer}       & 51.96\grayc             & 60.40\grayc              & 77.35\grayc             & 44.30\grayc              & 70.02\grayc             \\
        CLIPasso~\cite{clipasso}          & 12.55           &  13.55           & 55.35                  & 35.40    & 50.00          \\
        \model ~($\gL_1$)                            & 34.43\grayc             & 57.81\grayc             & 78.32\grayc             & 40.99\grayc             & 61.59\grayc             \\
        \textbf{\model ~($\gL_{percept}$)}                       & \textbf{55.15}    & \textbf{60.63}    & \textbf{90.14}    & \textbf{51.23}    & \textbf{76.16}    \\
        \bottomrule
        \end{tabular}
        \end{adjustbox}
    \label{table:5}
    \vspace*{-0.2em}
\end{table}

\begin{figure}[t]
    \centering
    \includegraphics[width=0.8\columnwidth]{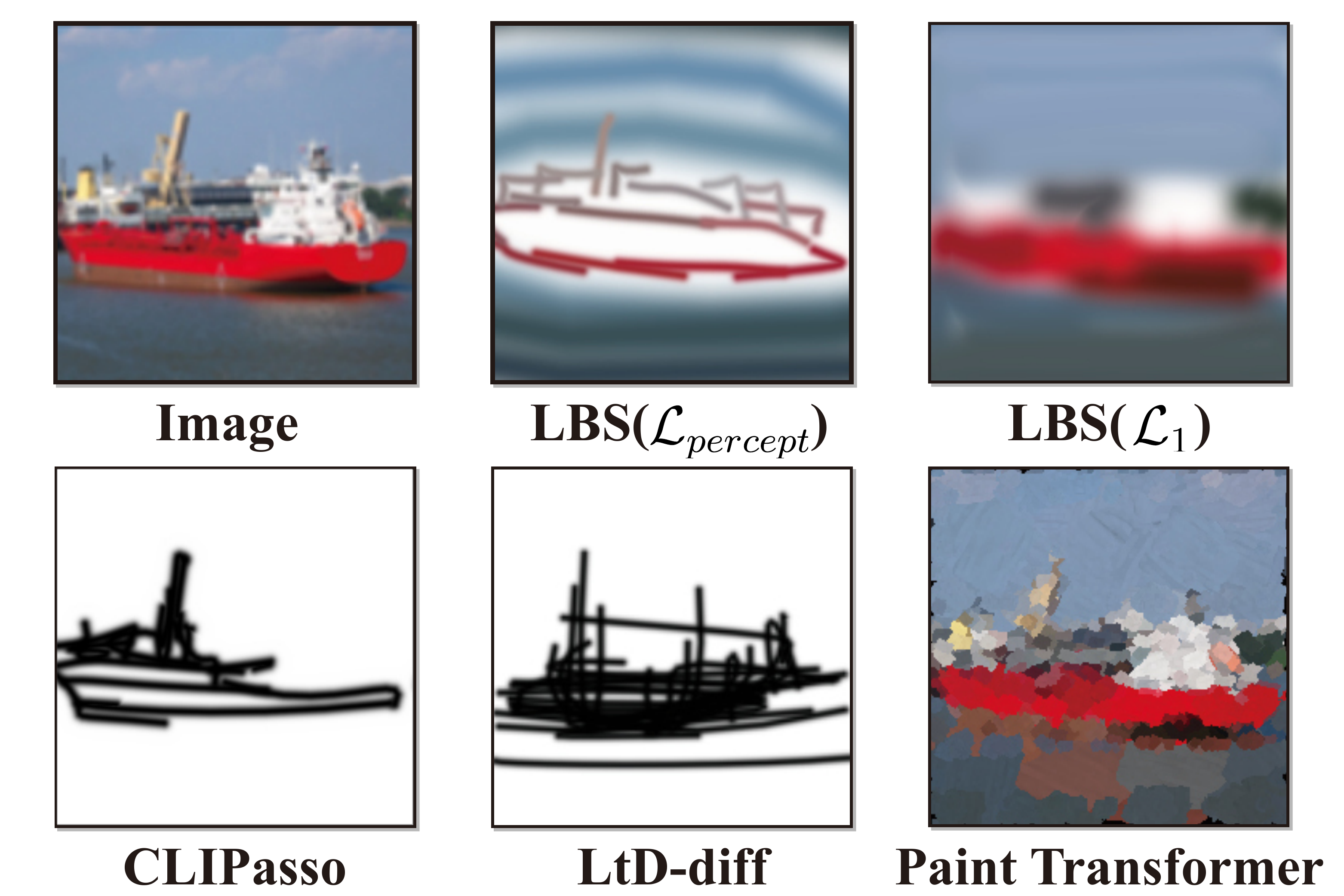}
    \vspace*{-0.5em}
    \caption{Comparisons to other stroke-based methods. \model~encodes both the geometric and semantic information.}	
    \label{figure:sketh_method}
    \vspace*{-1em}
\end{figure}

\section{Conclusion}
\label{sec:conclusion}

We propose a novel representation learning perspective through sketching, a visual expression that retains geometric information.
By considering the mathematical concept of equivariance, we provide a formal definition of sketch and stroke, and prove that strokes can preserve the information of arbitrary affine transformations.
Our sketch generation model, \model{}, minimizes CLIP-based perceptual loss and supports the idea that sketches can effectively convey geometric information. 
The experiments shed light on the potential benefits of leveraging the process of sketching to represent high-level geometric information explicitly.
These benefits include learning equivariant representations, understanding geometric concepts, improving spatial reasoning abilities, and acquiring general geometric information across different domains.

Still, there are several limitations to our method. 
The training process of \model{} relies on pre-trained CLIP and optionally U2-Net, and requires generating guidance strokes through a time-consuming optimization-based process.
Moreover, while our work has demonstrated the theoretical strengths of sketching in terms of equivariance, a clearer methodology and further experimental analysis are needed to fully achieve this potential. 
Future work should address these issues and extend the methodology, focusing on scalability and including more complex and realistic tasks.

\vspace{-1em}
\paragraph{Acknowledgement}
This work was partly supported by the Korean government (2021-0-02068-AIHub/25\%, 2021-0-01343-GSAI/25\%, 2022-0-00951-LBA/25\%, 2022-0-00953-PICA/25\%).

{\small
\bibliographystyle{ieee_fullname}
\bibliography{egbib}
}

\newpage
\onecolumn
\setcounter{section}{0}
\renewcommand\thesection{\Alph{section}}
\renewcommand\thesubsection{\thesection.\arabic{subsection}}
\section*{Appendix}
\section{Details on optimization-based guidance stroke generation process} \label{appendix:optim}

In this section, we describe the details of the optimization method for generating $\vp_{gt}$ described in \Cref{subsec:progressive}, 
by referring to the pipeline of CLIPDraw~\cite{clipdraw} and CLIPasso~\cite{clipasso} with modifications.

\paragraph{Saliency map.}
We use the saliency map of CLIPasso to sample the initial strokes. 
We first employed U2-Net~\cite{u2net} to remove the background from the STL-10 dataset.
Then, we obtained the relevancy map by averaging the attention heads of each self-attention layer of the ViT-32/B CLIP model.
Finally, we generated the saliency map by multiplying the edge map of the masked image using edges from the XDoG edge detection algorithm~\cite{xdog}.

\paragraph{Initialization process}
As described in \Cref{subsec:progressive}, CLIPasso stochastically samples the initialized strokes, which results in the guidance $\vp_{gt}$ being highly unstable. 
Instead, we propose to deterministically sample the initialized stroke with low computational cost using the following process. 
First, when initializing the first stroke, we sample the coordinate with the highest weight of the saliency map as the first control point of the first stroke.
The remaining control points are sampled by adding a small random perturbation. 
Next, we lower the weight of the saliency map by a value related to the distance between the sampled point. Specifically, we subtract the weight of the saliency map proportionately by the exponential of the negative squared distances scaled by $\sigma$,
i.e., $\text{saliency}(u) \leftarrow \text{saliency}(u) -I(u)\cdot\exp(-(u-t^{(1)}_1)/\sigma^2)$ where $\sigma=5$, $u\in\Omega$, and $t^{(1)}_1$ denotes the first control point of the first stroke.
Finally, we sample the next stroke from the modified saliency map, and the process is repeated for the number of strokes used by our model.

We observed that optimizing the stroke color with $\gL_{percept}$ led to the generated results approaching the style of painting with colors appearing similar to the object's texture, which is not the desired as described in \cref{subsec:style}.
To tackle this problem, we sample the color of each stroke during the initialization process and use it as the final color, instead of optimizing its value.
To determine the color of each stroke, we use the color of the image corresponding to the initial position of the first control point of each stroke.
Then, we refined the color with a small adjustment,
$\tilde{c} = \frac{\sigma((2c-1)\beta) - \sigma(-\beta)}{\sigma(\beta) - \sigma(-\beta)}$ where $\tilde{c}$ is the adjusted color ranged from 0 to 1.
$\sigma(\cdot)$ represents the logistic Sigmoid function, and $\beta$ is a hyperparameter that is set to 5 to promote color contrast.

\begin{figure*}[h!]
    \centering
    \includegraphics[width=0.95\textwidth]{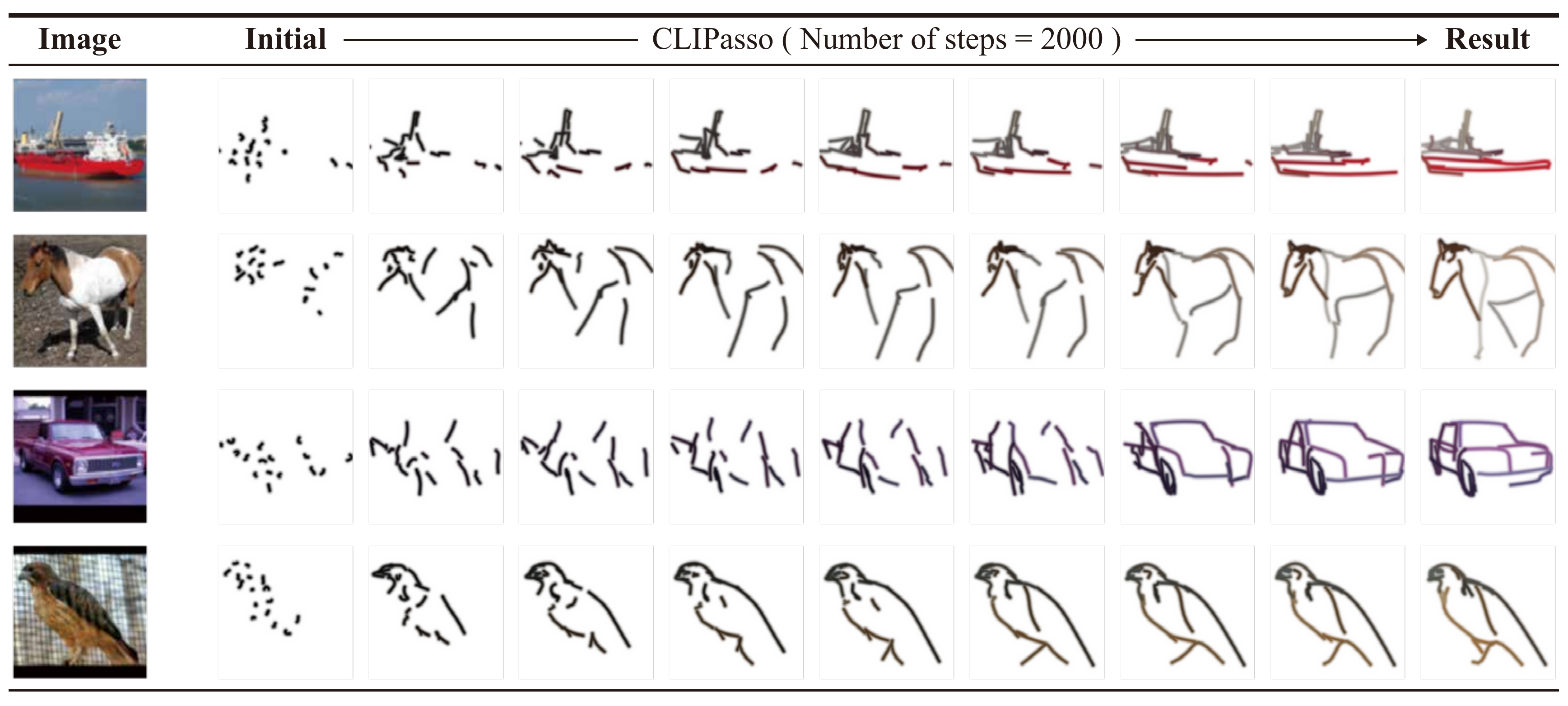}
    \caption{Visualization of optimization-based generation process. From left to right: examples of a given image, initialized stroke, and optimization results at 50, 100, 200, 400, 700, 1000, 1500, and 2000 iterations, respectively.}
    \label{figure:clipasso_examples}
\end{figure*}

\paragraph{Hyperparameters on stroke optimization}

We use the same optimization scheme as CLIPasso with our initialization process.
We repeat the optimization process for 2000 iterations and evaluate $\gL_{percept}$ with augmented images with random affine transformations. 
The strokes are trained with an Adam optimizer using a learning rate of 1.0.
We save 8 intermediate results, namely at steps 50, 100, 200, 400, 700, 1000, 1500, and 2000, for $\vp_{gt,l}$ where $l\in\{1, 2, ..., 8\}$, as shown in \Cref{figure:clipasso_examples}.

\section{Additional experimental setup} \label{appendix:setup}

\paragraph{Geoclidean dataset}

\begin{figure}[h!]
    \centering
    \includegraphics[width=0.95\columnwidth]{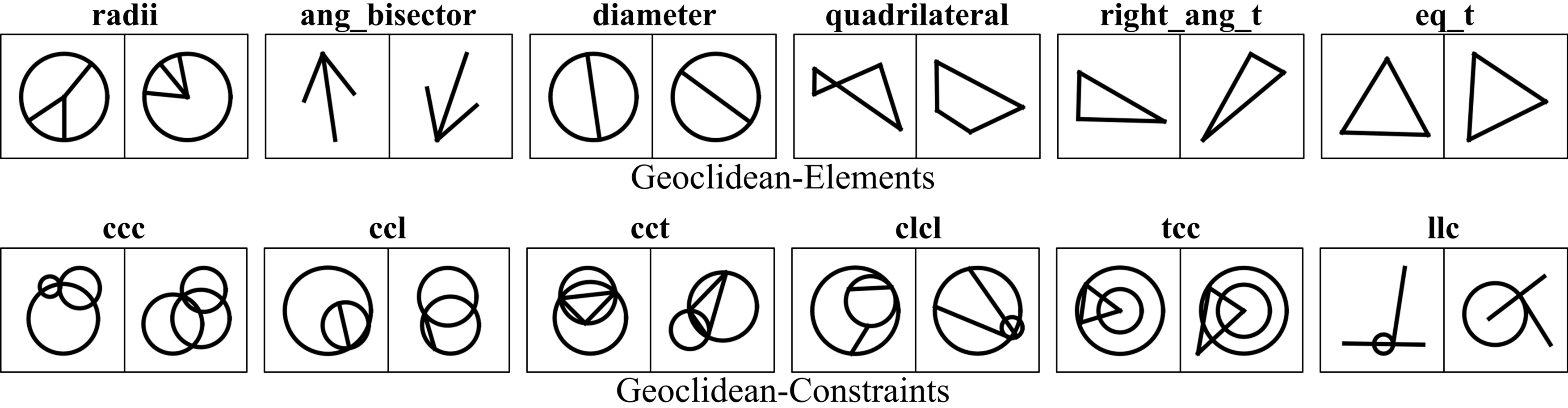}
    \vspace*{-0.2em}
    \caption{Example images of the Geoclidean dataset.}
    \label{figure:geoclidean}
    \vspace*{-0.2em}
\end{figure}

The Geoclidean dataset is divided into two categories: Geoclidean-elements and Geoclidean-concepts. 
The concepts of Geoclidean-Elements and Geoclidean-Constraints are derived from the first book of Euclid's Elements (\eg, equilateral triangle, rhomboid, or parallel lines) and from the relationship between primitive elements (\eg, 3 circles with the same point of intersection), respectively.
17 concepts consist Geoclidean-Elements dataset,
while 20 concepts consist Geoclidean-Constraints dataset.
Since the task on Geoclidean is designed to generalize each geometric concept with a small amount of data, we use only 10 examples per class for training.
The training is performed for 5000 epochs, and we evaluate the model with the weights obtained in the final epoch. 
To measure classification accuracy, we use 500 test images per class. 
The images are rescaled to have a resolution of 64$\times$64, and we apply random augmentations such as rotation in the range of (-90$^\circ$, 90$^\circ$), translations with a factor of (-0.1, 0.1), and scaling with a factor of (0.8, 1.2).

\paragraph{Rotated MNIST dataset}

The training is performed for 200 epochs with 2000 training samples and 500 validation samples per rotation.
To predict the transformation between unaltered and transformed images, we specifically classify the out-of-distribution transformation, namely 0, 90, 180, and 270 degrees rotation with or without horizontal reflection, resulting in 8 possible transformations.
For each model trained on rotMNIST, we concatenate the two latent vectors obtained with the input images for linear probing.
We rescale the image to have a 32$\times$32 resolution and invert the pixel values to make the image with black foreground on a white background.
For random augmentation, we use random cropping with a scale of (0.7, 1.0).

 \paragraph{CLEVR dataset}

\begin{figure}[h!]
    \centering
    \vspace*{-0.2em}
    \includegraphics[width=0.5\columnwidth]{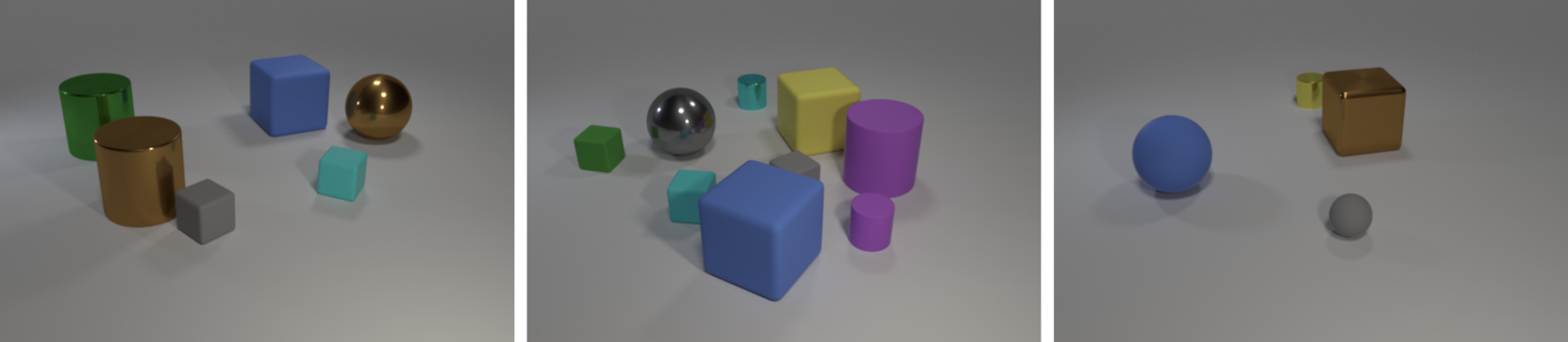}
    \vspace*{-0.2em}
    \caption{Example images of CLEVR dataset.}
    \label{figure:clevr}
    \vspace*{-0.2em}
\end{figure}

The CLEVR dataset contains rendered images of 3 to 10 objects. Each object has a set of attributes that include 8 colors, 3 shapes, 2 sizes, 2 materials, and 2D pixel positions, with question-answer pairs that evaluate a high-level understanding of the scene.
On the CLEVR dataset, we trained the model for 200 epochs using 8000 training samples and 2000 validation samples, with the images resized to 128$\times$128 resolution. 
Our task on CLEVR dataset explicitly requires the local geometric information of each object.
To preserve such local geometric information, we chose small adjustments for random augmentation, such as random cropping with a scale of (0.9, 1.0) with weak color jittering, as large adjustments can harm these attributes and negatively affect our model (e.g., random cropping can remove the rightmost object from the scene where the label is given as the rightmost object in the original scene).

\paragraph{STL-10 dataset}
On STL-10, we train the model for 50 epochs using 90k unlabelled images for training and 10k for validation. Then, we select the model with the lowest validation loss and evaluate the performance of tasks on the CLEVR dataset. Since cropping small regions can make the scene unclear to sketch, we use random cropping with a scale (0.8, 1.0), horizontal flipping, and color jittering for random augmentation.

\paragraph{QMUL-ShoeV2 dataset}
The QMUL-ShoeV2 dataset consists of 2,000 shoe photos and 6,648 corresponding human-drawn shoe sketches.
We use triplet loss as $\gL_{embed}$, with $z_{LBS+}$ generated from a shoe image and a corresponding sketch as input a positive pair, and $z_{LBS+}$ for another shoe image as a negative pair.
We trained the model for 1000 epochs using 200 shoes as the test split, with the images resized to 128$\times$128 resolution. 
Since the sketches provided by the dataset are sketches with only black strokes, LBS also generates sketches with the stroke color fixed to black. 
We use random augmentation, which is the same setting as STL-10. 

\section{Implementation details \& hyperparameter settings.} \label{appendix:implementation}

For the implementation of \model{}, we selected the cubic Bézier curve for parameterizing each stroke.
As the samples from rotMNIST and Geoclidean can be accurately reconstructed with a few strokes, 
we replace $\gL_{percept}$ and $\gL_{guide}$ with $\gL_1$ loss, which does not rely on a pre-trained CLIP model.
To ensure strokes do not deviate from the grid space and are standardized to be from left to the right, we penalize $\vp$ as follows:

\begin{align}
    \gL_{boundary} &= \sum_{p\in\vp} \left[ \sum_{t\in p} \max(0, -1+||t||_{\infty})\right], \\
    \gL_{align} &= \sum_{p\in\vp} \left[ \max(0, t_{1, x} - t_{4, x}) \right], \\
    \gL_{penalty} &= \gL_{boundary} + \gL_{align},
\end{align}

where $t_{1, x}$, $t_{4, x}$ is the $x$ coordinate of the first and last control point of $p$, respectively.
The final loss is $\gL_1 + \lambda_p\cdot\gL_{penalty} + \lambda_e\cdot\gL_{embed}$, 
where $\lambda_p=0.1, \lambda_e=0.01$ on Geoclidean and $\lambda_p=1, \lambda_e=0.1$ on rotMNIST.

We selected a 4-layer CNNs and a 2-layer Transformer decoder with a hidden size of 256 for the CNN encoder and the stroke generator.
The embedding network is based on the Invariant model from DeepSets~\cite{deepSets}, and we used 2-layer MLPs for embedding each stroke and a 1-layer MLP for the aggregated embedding. 
We use 4 strokes for the Geoclidean dataset and 8 for the rotMMNIST dataset. 
The thickness is fixed for the Geoclidean dataset, and the color is fixed to be black for both experiments.
We train the models using AdamW optimizer~\cite{adamw} with a learning rate of 1e-3.

On CLEVR, STL-10, and QMUL-ShoeV2 datasets, the CNN encoder is based on ResNet18 architecture, and the stroke generator is based on an 8-layer Transformer decoder with 512 hidden units. 
We use an embedding network with the same architecture as the one used in the Geoclidean experiment. 
To ensure consistency in stroke order, we arrange $p_{gt}$ to follow a left-to-right order. 
We use 24 strokes for the CLEVR dataset, 20 for the STL-10 dataset, and 10 for the QMUL-ShoeV2 dataset, with thickness fixed for all datasets. 
The hyperparameters for the loss function are set to $\lambda_g=10, \lambda_e=0.1$ for CLEVR dataset, $\lambda_g=1, \lambda_e=0.1$ for STL-10 dataset, and $\lambda_g=10, \lambda_e=1$ for QMUL-ShoeV2 dataset. 
In STL-10, we assign 4 strokes as background strokes to express information about the background and trained by minimizing the $\gL_1$ loss with the masked foreground.
We use an AdamW optimizer with a learning rate of 2e-4.
We use the ResNet101 architecture of the publicly available pre-trained CLIP model to measure $\gL_{percept}$. The effects of changing the CLIP model architecture are described in Appendix \ref{appendix:architecture}.

Training \model{} with $\gL_{LBS}$ requires a large amount of GPU memory. Therefore, to ensure memory efficiency during training of \model{}~(InfoNCE) in Appendix \ref{appendix:loss_ablation} and \ref{appendix:rep_ablation}, we employed a momentum encoder and queue from MoCo~\cite{moco}. Specifically, we used a batch size of 32.

\paragraph{Evaluation protocol.}
All evaluations are conducted with linear probing, where we train a single linear head for evaluation.
We select the model with the best validation results during training for evaluation. 
We train the linear head 2 times, each time for 100 epochs using the SGD optimizer with a learning rate of 1 and 0.1, respectively.
We evaluate the accuracy with test samples for all epochs and learning rates and report the best value as the evaluation accuracy.

\paragraph{Baselines.}
For $\text{E(2)}$-CNN, we use the $C_8$-equivariant CNN model for rotMNIST and Geoclidean dataset, the $C_8C_4C_1$-equivariant WideResNet 16-8 model~\cite{wrn} for CLEVR dataset, and the $D_8D_4D_1$ model for STL-10 dataset~\cite{steerable}.
E(2)-CNN is trained with Cross-Entropy loss in the labeled setting and InfoNCE loss in the unlabeled setting. 
For LtD~\cite{learningToDraw}, we select the Object Oriented game-different setup for rotMNIST, Geoclidean, and CLEVR datasets (LtD-diff), while selecting the original setup for STL-10 (LtD-original).
For $\beta$-TCVAE~\cite{betaTCVAE}, we adopt the hyperparameters from the public repository\footnote{https://github.com/YannDubs/disentangling-vae}, with a latent size of 16 for rotMNIST and Geoclidean datasets, and 32 for CLEVR and STL-10.
For DefGrid~\cite{grid}, we use a 15$\times$15 grid for the CLEVR dataset and a 20$\times$20 grid for the STL-10 dataset.
Since our experiments use a lower resolution than the image resolution used in DefGrid, we increase $\lambda_{area}$ to 0.02 and 0.1 for the CLEVR and STL-10 datasets, respectively. 
We conducted experiments using our implementation of the work by Novotny \etal~\cite{geoSSL} (referred to as GeoSSL in our paper), as no publicly available code was available.

\section{Differentiable rasterizer} \label{appendix:renderer}
The rendering process $r$ maps the set of strokes $\vp\in\gP$ generated by the stroke generator of \model{} to a signal $S$ on the physical domain $\Omega$. 
We calculate the CLIP-based perceptual loss between the realized sketch $S(\Omega)$ and the original image $I(\Omega)$ by computing every pixel value in $\Omega$ from the signal $S=r(I)$. 
To enable the gradient of the loss to propagate back to the stroke generator, we use a differentiable rasterizer \cite{renderer} that allows the mapping of $\vp$ to $S(\Omega)$ to be differentiable. 
\cite{renderer} formulated the differentiable pixel rasterization process by setting the value of each pixel to be an exponential of the negative squared distance between the pixel and the vector primitive $p=(t_1, t_2, t_3, t_4, c, w)$ (i.e., stroke) as follows:

\begin{align} \label{eq:renderer_1}
    r'(u;p)=\exp(-d_{cur}^2(u,p)/w^2) \cdot c,
\end{align}

where $d_{cur}$ is the Euclidean distance between the parametric curve $C(s,p)$ ($0\leq s\leq 1$) represented by $p$ and each pixel coordinate $u\in\Omega$.

\begin{align}
\begin{split}
    d_{cur}^2(u;p)=&\min_s{||C(s,p)-u||^2_2} \\
    &\text{s.t.} \quad 0\leq s\leq 1
\end{split}
\end{align}

To better represent the curvature information of the image, we chose a cubic Bézier Curve consisting of four control points among various parametric curves.
The first and last control points are set as the starting and ending points of the curve, respectively.

\begin{align}
    C(s, p)=(1-s)^3t_1 + s(1-s)^2t_2 + s^2(1-s)t_3 + s^3t_4
\end{align}

To enable more pixels to be affected by strokes during the early stages of training without changing the stroke thickness while generating a sharp sketch at the later stages, we anneal the exponential value applied to Euclidean distance and $w$ in \cref{eq:renderer_1}. 
Specifically, we increase the exponential value to increase linearly from 1 to 2 during the stages of training and decrease the value of $w$ linearly from $2w$ to $w$.

We use the \textit{over} composition \cite{renderer} to extend this process to multiple strokes. 
The over composition of two images $A$ and $B$ is defined as $c_{over}(A, B) = A + B(1-A)$, and can be recursively defined for any sequence of images. 
For numerical stability, the cumulative product expressed as an exponentiated sum of log differences is used as follows:

\begin{align}
    r(\Omega;\vp) = \sum_{i=1}^k r'(\Omega;p^{(i)}) \odot \exp\left(\sum_{j=1}^{i-1}\log\left(1-r'(\Omega;p^{(j)})\right)\right)
\end{align}

However, the over composition has a problem of color blending for colored strokes (e.g., yellow color is rasterized at the intersection of red and green strokes). 
To ensure that the color of the stroke painted later replaces the color of the previous stroke, we modify the cumulative product as follows.

\begin{align}
    r(\Omega;\vp) = \sum_{i=1}^k r'(\Omega;p^{(i)}) \odot \exp\left(\sum_{j=1}^{i-1}\log\left(1-r'_{intensity}(\Omega;p^{(j)})\right)\right),
\end{align}

where $r'_{intensity}$ refers to the value without multiplying the color parameter, i.e., $r'_{intensity}(u;p)=\exp(-d_{cur}^2(u,p)/w^2)$.

\section{Omitted proofs} \label{appendix:proof}

We first start with the definition of groups, group actions, and group representations.

\begin{definition}
    \vspace{0.75em}
    A \textbf{group} $\gG$ is a set with a composition $\circ:\gG\times\gG\rightarrow\gG$ (where we denote as $g\circ h=gh$) satisfying:
    \begin{enumerate}[label=(\roman*)]
        \item \textbf{Associativity}: $(gh)i=g(hi), \quad\forall g,h,i\in\gG$.
        \item \textbf{Identity}: $\forall g\in\gG$, there exists a unique $e\in\gG$ s.t. $eg=ge=g$. 
        \item \textbf{Inverse}: $\forall g\in\gG$, there exists a unique $g^{-1}\in\gG$ s.t. $gg^{-1}=g^{-1}g=e$. 
        \item \textbf{Closure}: $\forall g,h\in\gG$, we have $gh\in\gG$.
    \end{enumerate}
\end{definition}

\begin{definition}
    \vspace{0.75em}
    For $\gG$ with identity element $e$ and a set $\gX$,
    a (left) \textbf{group action} of $\gG$ on $\gX$ is a function $(g, x)\mapsto g.x$ that satisfies:
    \begin{enumerate}[label=(\roman*)]
        \item \textbf{Identity}: $e.x=x, \quad\forall x\in\gX$.
        \item \textbf{Compatibility}: $g.(h.x)=(gh).x, \quad\forall g,h\in\gG, \forall x\in\gX$.
    \end{enumerate}
\end{definition}

where we defined the group of geometric transformations $\gG$ acting on $\gI$ as $(g.I)(u)=I(g^{-1}u)$ in Section \ref{sec:property}.

\begin{definition}
    \vspace{0.75em}
    A \textbf{group representation} $\rho:\gG\rightarrow GL(V)$ is a map that assigns each $g$ to an invertible matrix $\rho(g)$, 
    such that $\rho(gh)=\rho(g)\rho(h)$ for $\forall g,h\in\gG$.
\end{definition}

Since actions on signals are linear, i.e., $g.(\alpha I+\beta I')(u)=(\alpha I+\beta I')(g^{-1}u)=\alpha(I)(g^{-1}u)+\beta(I')(g^{-1}u)=\alpha(g.I)(u)+\beta(g.I')(u)$ for any scalars $\alpha, \beta$ and signals $I, I'\in\gI$,
we can describe group action of $g$ acting on $\gI$ as a linear matrix $\rho(g)\in GL(|\Omega|, \sR)$ which acts on $\Omega$. Since $\gS\subset \gI$, sketch shares the same physical domain $\Omega$ as $\gI$.
Thus, finding the linear operator $\rho'(g)$ for equivariance is trivial; we can let $\rho'(g)$ as the same operator which acts in the image space, $\rho'=\rho$.
For example, equivariance with respect to rotation could be achieved if the sketch representation of a rotated image is equally rotated and for other arbitrary geometric transformations.

We first show that sketch representation which minimizes a metric function with some constraints, is a geometry-aware representation w.r.t arbitrary geometric transformations.
Then, we show that stroke representation is also a geometry-aware representation w.r.t affine transformations.

\begin{lemma} \label{lemma:sketch}
    If $\gL$ holds the following conditions for $\forall I\in\gI, \forall S\in\gS, \forall g\in\gG$:
    \begin{enumerate}[label=(\roman*)]
        \item $\gL(\rho(g)I, \rho(g)S) = \lambda(g)\gL(I, S)$ where $\lambda: \gG \rightarrow \sR^+$.
        \item $\exists! S_I$ s.t. $S_I = \argmin_{S\in\gS}\gL(I,S)$ 
        \item $\rho(g)S\in\gS$.
    \end{enumerate}
    Then optimal sketch representation $S_I$ with given $\gL$ is a \geofeature{} with respect to any $\gG$.
\end{lemma}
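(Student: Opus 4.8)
The plan is to verify \Cref{def:geofeature} directly for the map $I\mapsto S_I$, taking $\rho'=\rho$. This choice is legitimate because $\gS\subset\gI$ shares the same physical domain $\Omega$, so the operator $\rho(g)$ acting on images restricts to an operator on sketches; concretely, I will establish the single equivariance identity $S_{\rho(g)I}=\rho(g)S_I$ for all $I\in\gI$ and $g\in\gG$. Before the main computation I would first record the structural consequence of condition (iii): since $\rho(g)\gS\subseteq\gS$ holds for \emph{every} $g\in\gG$, applying it to $g^{-1}$ gives $\gS\subseteq\rho(g)\gS$, hence $\rho(g)\gS=\gS$. Together with the invertibility of the group representation $\rho(g)$, this makes $S\mapsto\rho(g)S$ a bijection of the feasible set $\gS$ onto itself, which is what lets me change variables inside the argmin.

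The core step is a change-of-variables argument on the optimization defining $S_{\rho(g)I}=\argmin_{S\in\gS}\gL(\rho(g)I,S)$. For each $S\in\gS$ I would write $S=\rho(g)S'$ with $S'=\rho(g)^{-1}S=\rho(g^{-1})S\in\gS$, and invoke condition (i):
\[
\gL(\rho(g)I, S)=\gL(\rho(g)I,\rho(g)S')=\lambda(g)\,\gL(I,S').
\]
Because $\lambda(g)>0$ is a strictly positive constant independent of $S$, rescaling by $\lambda(g)$ leaves the minimizer unchanged, so minimizing $\gL(\rho(g)I,S)$ over $S\in\gS$ is equivalent to minimizing $\gL(I,S')$ over $S'\in\gS$ and then mapping the result back by $\rho(g)$. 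The minimizer of the latter is $S_I$ by definition, whence $S_{\rho(g)I}=\rho(g)S_I$. The uniqueness hypothesis (ii) is what guarantees both argmins are singletons, so this is a genuine pointwise equality of sketches rather than a relation between solution sets, and it is exactly $f(\rho(g)I)=\rho'(g)f(I)$ with $\rho'=\rho$.

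The argument is essentially a one-line substitution once the bookkeeping is set up, so there is no deep obstacle; the only points demanding care are the two places where the hypotheses are genuinely used. First, the substitution $S\mapsto\rho(g)^{-1}S$ must keep the two optimization problems ranging over the \emph{same} feasible set $\gS$ — this is precisely where condition (iii) (in its bidirectional form $\rho(g)\gS=\gS$) is essential, and omitting it would break the equivalence of the two argmins. Second, uniqueness (ii) is needed to promote the equality of minimizers to a well-defined equivariant map, since otherwise $S_I$ would be only a set and \Cref{def:geofeature} could not be applied verbatim. I would flag that conditions (i) and (iii) must hold for all $g\in\gG$ (not just a generating subset) for the inverse-element trick to be available.
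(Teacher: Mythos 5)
Your proof is correct and follows essentially the same route as the paper's: a change of variables $S\mapsto\rho(g)^{-1}S$ inside the argmin, condition (i) to pull out the factor $\lambda(g)$, positivity of $\lambda(g)$ to discard it, and the conclusion $S_{\rho(g)I}=\rho(g)S_I$ with $\rho'=\rho$. Your explicit observation that condition (iii) applied to $g^{-1}$ upgrades $\rho(g)\gS\subseteq\gS$ to $\rho(g)\gS=\gS$ — which is what actually licenses the change of variables over the same feasible set — is a point the paper's proof uses implicitly without stating; making it explicit is a small but genuine improvement in rigor.
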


\begin{proof}
    With $\rho(g)S\in\gS$, we can also let $\rho$ as a group representation of $\gS$. \\
    Let $S'_I=\argmin_{S\in\gS}\gL(I,\rho(g)^{-1}S)$. \\
    \begin{equation}
        \gL(I, S_I)=\min_{S\in\gS}\gL(I,S)=\min_{S\in\gS}\gL(I,\rho(g)^{-1}S)=\gL(I, \rho(g)^{-1}S'_I),
    \end{equation}
    \begin{equation} \label{eq:lemma1}
        \therefore \; \rho(g)^{-1}S'_I=S_I,\;\argmin_{S\in\gS}\gL(I, \rho(g)^{-1}S)=\rho(g)\argmin_{S\in\gS}\gL(I, S).
    \end{equation}

    \begin{align}
    \begin{split}
        S_{\rho(g)I} &= \argmin_{S\in\gS}\gL(\rho(g)I, S) \\
                     &= \argmin_{S\in\gS}\gL(\rho(g)I, \rho(g)\rho(g)^{-1}S) \\ 
                     &= \argmin_{S\in\gS}\lambda(g)\gL(I, \rho(g)^{-1}S) \quad \Scale[0.7]{\leftarrow condition. (i)} \\
                     &= \argmin_{S\in\gS}\gL(I,\rho(g)^{-1}S) \\ 
                     &= \rho(g)\argmin_{S\in\gS}\gL(I,S) \quad \Scale[0.7]{\leftarrow equation. (\ref{eq:lemma1})} \\
                     &= \rho(g)S_I
    \end{split}
    \end{align}

    Therefore, $S_I$ is a \geofeature{} w.r.t any geometric transformations, where $\rho'=\rho$.
\end{proof}

We now define optimal sketch function $[r\circ f]^*(I)=S_I$ which minimizes $\gL$ in \cref{coroll:optimal_fn},
and show that $\gL_{percept}$ in \cref{subsec:lbs} can satisfy the first condition of \cref{lemma:sketch} w.r.t affine transformations in \cref{coroll:augmentation}.

\begin{corollary} \label{coroll:optimal_fn}
    If \cref{lemma:sketch} holds,
    a function $[r\circ f]$ that minimizes the metric between $I$ and $[r\circ f](I)$, \\
    i.e., $[r\circ f]^*=\argmin_{f}\gL(I, [r\circ f](I))$, generates a optimal sketch representation $[r\circ f]^*(I)=S_I$.
\end{corollary}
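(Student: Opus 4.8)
The plan is to recognize this as a reparametrization statement: minimizing the composite map $[r\circ f]$ over all embedding functions $f$ is equivalent to minimizing directly over sketches $S\in\gS$, after which the claim collapses to the definition of the optimal sketch representation $S_I$ given in \cref{def:optimalrep}. The key structural fact I would exploit is that, by construction, $\gS$ is precisely the codomain/image of the rendering map $r:\gP\rightarrow\gS$, so $r$ is surjective onto $\gS$ and every sketch can be realized by some stroke representation.

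First I would fix an arbitrary image $I\in\gI$ and observe that for every $f$ we have $[r\circ f](I)\in\gS$, since $r$ always outputs a sketch. This immediately gives the lower bound
\begin{equation}
\gL(I, [r\circ f](I)) \;\geq\; \min_{S\in\gS}\gL(I,S) \;=\; \gL(I, S_I),
\end{equation}
valid for every choice of $f$. Next I would show the bound is attained: because $S_I\in\gS$ and $r$ is surjective onto $\gS$, there exist strokes $\vp_I\in\gP$ with $r(\vp_I)=S_I$, and defining $f$ pointwise by $f(I)=\vp_I$ yields $[r\circ f](I)=S_I$. Hence the minimal objective value over $f$ equals $\gL(I,S_I)$ and is achieved. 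Finally, invoking uniqueness of the minimizer — condition (ii) of \cref{lemma:sketch} — any minimizing function $[r\circ f]^*$ must satisfy $[r\circ f]^*(I)=S_I$, since any sketch attaining the minimal metric value must coincide with $S_I$.

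The main point requiring care — more a subtlety than a genuine obstacle — is the passage from \emph{optimizing over functions} $f$ to \emph{optimizing pointwise over sketches} $S\in\gS$. For a fixed $I$, the objective $\gL(I,[r\circ f](I))$ depends on $f$ only through the single value $f(I)$, so the functional minimization decouples across images and reduces exactly to the pointwise problem $\min_{S\in\gS}\gL(I,S)$; the uniqueness hypothesis then makes this pointwise minimizer well-defined and equal to $S_I$. I would also state explicitly the surjectivity of $r$ onto $\gS$ used in the attainment step, noting that it holds by the very definition $r:\gP\rightarrow\gS$ of the rendering process, so that the infimum is genuinely realized by a stroke representation rather than merely approached.
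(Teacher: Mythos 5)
Your proof is correct and follows essentially the same route as the paper: the paper's one-line chain $\min_{f}\gL(I,[r\circ f](I))=\min_{S\in\gS}\gL(I,S)=\gL(I,S_I)$ followed by an appeal to uniqueness is exactly the two-sided bound you spell out, with the surjectivity of $r$ onto $\gS$ (which holds by the paper's definition of $\gS$ as the image of the rendering process) left implicit there and made explicit by you. No gaps.
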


\begin{proof}
    $\gL(I, [r\circ f]^*(I))=\min_{f}\gL(I, [r\circ f](I))=\min_{S\in\gS}\gL(I, S)=\gL(I, S_I)$. \\
    Since optimal sketch representation is unique, $[r\circ f]^*(I)=S_I$.
\end{proof}

\begin{corollary} \label{coroll:augmentation}
    \vspace{0.75em}
    For any $\gL':\gI\times\gS\rightarrow\sR$, 
    $\gL(I, S)=\mathbb{E}_{g}[\gL'(\rho(g)I, \rho(g)S)]$ satisfies the first condition of \cref{lemma:sketch}.
\end{corollary}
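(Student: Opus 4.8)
The plan is to verify condition (i) of \cref{lemma:sketch} directly from the definition of $\gL$, with the multiplicative factor $\lambda$ emerging as the modular function of the measure underlying the expectation $\mathbb{E}_g$. Throughout I read $\mathbb{E}_g[\cdot]$ as integration against a fixed left Haar measure $\mu$ on $\gG$, which exists because the affine group $\gA$ (and the relevant subgroups) are locally compact. The whole argument is a change of variables together with the homomorphism property $\rho(gh)=\rho(g)\rho(h)$ supplied by the definition of a group representation; no genuine computation is involved.

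First I would fix an arbitrary $h\in\gG$ and expand the left-hand side of condition (i) using the definition of $\gL$ and linearity/compatibility of $\rho$:
\begin{equation}
\gL(\rho(h)I, \rho(h)S) = \mathbb{E}_g\!\left[\gL'(\rho(g)\rho(h)I,\, \rho(g)\rho(h)S)\right] = \mathbb{E}_g\!\left[\gL'(\rho(gh)I,\, \rho(gh)S)\right],
\end{equation}
so that the integrand is exactly the one defining $\gL(I,S)$, but evaluated at $gh$ instead of $g$. The key step is then the substitution $g\mapsto gh$ inside the integral. Because $\mu$ is left- but not necessarily right-invariant, right translation by $h$ rescales the measure by the \emph{modular function} $\Delta$, a homomorphism $\gG\to\sR^+$, via $\int F(gh)\,d\mu(g)=\Delta(h)^{-1}\!\int F(g)\,d\mu(g)$ for integrable $F$. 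Applying this with $F(g)=\gL'(\rho(g)I,\rho(g)S)$ yields
\begin{equation}
\gL(\rho(h)I,\rho(h)S)=\Delta(h)^{-1}\,\mathbb{E}_g\!\left[\gL'(\rho(g)I,\rho(g)S)\right]=\Delta(h)^{-1}\,\gL(I,S),
\end{equation}
so condition (i) holds with $\lambda(h)=\Delta(h)^{-1}$. This $\lambda$ is strictly positive since $\Delta$ takes values in $\sR^+$, depends only on $h$ and not on $I$ or $S$ as required, and reduces to $\lambda\equiv 1$ when $\gG$ is unimodular (as for pure rotations and translations).

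The main obstacle I anticipate is making this change of variables rigorous for $\gA$, which is genuinely non-unimodular: one cannot assume the expectation is right-translation invariant, and the factor $\lambda$ must be identified with the modular function rather than set to $1$. A secondary subtlety is that a left Haar measure on a non-compact group is infinite, so $\mathbb{E}_g$ should strictly be read as an integral against $\mu$ (or restricted to a subgroup/compact piece carrying a true probability measure); once the underlying measure is pinned down, the homomorphism property of $\rho$ and the defining identity of $\Delta$ close the argument immediately, giving the required $\lambda:\gG\to\sR^+$.
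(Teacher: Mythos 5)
Your argument follows the same skeleton as the paper's proof: expand $\gL(\rho(h)I,\rho(h)S)$ using the definition of $\gL$, apply the homomorphism property $\rho(g)\rho(h)=\rho(gh)$, and then absorb the right translation by $h$ into the expectation. The difference lies in how that final step is justified, and here your route is genuinely more general. The paper simply relabels $g''=g'g$ and declares the expectation unchanged ``since $\gG$ is closed for its composition,'' which silently assumes the underlying measure is right-translation invariant and so concludes $\lambda\equiv 1$. You instead integrate against a left Haar measure and correctly note that right translation rescales it by the modular function, giving condition (i) with $\lambda(h)=\Delta(h)^{-1}$ rather than $1$; since \cref{lemma:sketch} only asks for \emph{some} $\lambda:\gG\to\sR^+$, this still suffices, and it repairs a real gap in the paper's argument for the non-unimodular affine group, where closure under composition alone does not give $\mathbb{E}_{g'}[F(g'g)]=\mathbb{E}_{g'}[F(g')]$. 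What the paper's looser version buys is brevity and the cleaner conclusion $\lambda\equiv 1$ (which is what actually holds for the compact or unimodular subgroups one cares about, and for any genuinely right-invariant averaging); what your version buys is correctness over all of $\gA$. The caveat you flag, that a Haar integral over a non-compact group is not a probability expectation and may diverge, is genuine and afflicts both proofs equally; in the implementation the ``expectation'' is a finite sum over a sampled set of augmentations that is neither a group nor right-invariant, so neither argument applies verbatim there. Within the idealized setting of the corollary, your derivation is correct and the identification $\lambda=\Delta^{-1}$ is the right one.
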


\begin{proof}
    Since $\gG$ is closed for its composition, 
    \begin{align}
    \begin{split}
        \gL(\rho(g)I, \rho(g)S)&=\mathbb{E}_{g'}[\gL'(\rho(g')\rho(g)I, \rho(g')\rho(g)S)] \\
                               &=\mathbb{E}_{g'}[\gL'(\rho(g'g)I, \rho(g'g)S)] \\
                               &=\mathbb{E}_{g''}[\gL'(\rho(g'')I, \rho(g'')S)] \\
                               &=\gL(I, S).
    \end{split}
    \end{align} 
\end{proof}

\cref{lemma:sketch} implies that optimal $[r\circ f]$ which minimizes the metric function $\gL$ that satisfies the conditions in \cref{lemma:sketch} can make sketch $S=[r\circ f](I)$ as a \geofeature{} w.r.t arbitrary geometric transformations.
We now show that a set of strokes $\vp=f(I)$ can also be a \geofeature{} w.r.t arbitrary affine transformations $\gA$.

\begin{lemma} \label{lemma:stroke}
    \vspace{0.75em}
    If the following condition holds:
    \begin{enumerate}[label=(\roman*)]
        \item $S_I$ is a \geofeature{} w.r.t $\gA$.
        \item The rendering process $r:\gP\rightarrow\gS$ is a $\gA$-equivariant map.
        \item There exists a function $f^*:\gI\rightarrow\vp$ s.t. $[r\circ f^*]=[r\circ f]^*$ and $r(f^*(I))=r(f^*(I'))\Rightarrow f^*(I)=f^*(I')$.
    \end{enumerate}
    Then $f^*(I)$ with given $\gL$ is a \geofeature{} with respect to $\gA$.
\end{lemma}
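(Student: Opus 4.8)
The plan is to prove that $f^*$ is an $\gA$-equivariant map, which is exactly the statement that $f^*(I)$ is a \geofeature{} w.r.t.\ $\gA$ by \cref{def:geofeature}. The representation witnessing this equivariance will not be constructed by hand; instead I would borrow the one already supplied by the rendering map. Concretely, condition (ii) says $r$ is $\gA$-equivariant, so there is a group representation $\tilde\rho$ acting on $\gP$ with $r(\tilde\rho(g)\vp)=\rho(g)\,r(\vp)$ for all $\vp\in\gP$ and $g\in\gA$, where $\rho$ is the action inherited from $\gS\subset\gI$. The claim to establish is then $f^*(\rho(g)I)=\tilde\rho(g)f^*(I)$, with $\rho'=\tilde\rho$.

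I would first assemble the two ingredients to be chained. From condition (i), since $S_I\in\gS$ carries the same action $\rho$ as the ambient image space, its $\gA$-equivariance reads $S_{\rho(g)I}=\rho(g)S_I$. From \cref{coroll:optimal_fn} together with the identity $[r\circ f^*]=[r\circ f]^*$ asserted in condition (iii), I obtain $r(f^*(I))=S_I$ for every $I\in\gI$. These are the only facts the argument needs.

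The heart of the proof is a single chain of equalities obtained by substituting $\rho(g)I$ for the image and pushing the transformation through $r$ from both directions:
\[
r\bigl(f^*(\rho(g)I)\bigr)=S_{\rho(g)I}=\rho(g)S_I=\rho(g)\,r\bigl(f^*(I)\bigr)=r\bigl(\tilde\rho(g)f^*(I)\bigr),
\]
where the first and third equalities use $r\circ f^*=S_{(\cdot)}$, the second uses the equivariance of $S_I$, and the last uses the equivariance of $r$. Hence $f^*(\rho(g)I)$ and $\tilde\rho(g)f^*(I)$ render to the same sketch, and invoking the injectivity in condition (iii) collapses them to a single stroke representation, giving $f^*(\rho(g)I)=\tilde\rho(g)f^*(I)$ as required.

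I expect the one genuinely delicate point to be the final appeal to injectivity. Literally, condition (iii) only separates stroke representations \emph{in the image of} $f^*$, whereas the chain yields equality of renderings of $f^*(\rho(g)I)\in\operatorname{Im}(f^*)$ and $\tilde\rho(g)f^*(I)$, the latter a priori only a generic element of $\gP$. The cleanest way to close this gap — and the reading consistent with the main text's requirement that stroke representations be injective for $r$ — is to apply injectivity of $r$ on all of $\gP$, so that equal renderings immediately force equal strokes; both elements then lie in the single-point fiber $r^{-1}(S_{\rho(g)I})$. I would additionally carry out the routine bookkeeping that $\tilde\rho$ is a bona fide representation (the homomorphism and invertibility properties), which transfers directly from the affine-group equivariance of $r$ granted by condition (ii).
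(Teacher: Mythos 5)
Your proof is correct and follows essentially the same route as the paper's: chain $r(f^*(\rho(a)I)) = [r\circ f]^*(\rho(a)I) = \rho(a)[r\circ f]^*(I) = \rho(a)r(f^*(I)) = r(\rho'(a)f^*(I))$ using \cref{coroll:optimal_fn}, condition (i), and condition (ii), then cancel $r$ by injectivity. Your observation that condition (iii) as written only separates points in the image of $f^*$, whereas the cancellation must distinguish $f^*(\rho(a)I)$ from $\rho'(a)f^*(I)$ (a priori a generic element of $\gP$), is a genuine subtlety that the paper's own proof also glosses over, and your fix --- requiring $r$ to be injective on the relevant fibers --- is the intended reading.
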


\begin{proof}
    Since $r$ is a $\gA$-equivariant map, there exists a $\rho'$ s.t. $\rho(a)r(\vp)=r(\rho'(a)\vp)$ for $\forall a\in\gA, \vp\in\gP$. Then,
    \begin{equation}
        r(f^*(\rho(a)I))=[r\circ f]^*(\rho(a)I)=\rho(a)[r\circ f]^*(I)=\rho(a)(r(f^*(I)))=r(\rho'(a)f^*(I)).
    \end{equation}
    \begin{equation*}
        \therefore \; f^*(\rho(a)I)=\rho'(a)f^*(I),\; f^* \text{ is a geometry-aware representation w.r.t} \;\gA.
    \end{equation*}
\end{proof}

\cref{lemma:stroke} requires additional conditions: $r$ must be $\gA$-equivariant map and must be injective in the range of $f^*$.
By using the rendering process as a differentiable rasterizer with parameterized Bézier curves described in Appendix \ref{appendix:renderer}, conditions to be $\gA$-equivariance map can be satisfied.

\begin{lemma} \label{lemma:renderer}
    \vspace{0.75em}
    Let $C(s, p)=(1-s)^3t_1 + s(1-s)^2t_2 + s^2(1-s)t_3 + s^3t_4$ a cubic Bézier curve where $p=(t_1, t_2, t_3, t_4, c, w)$, $u\in\Omega$, and $s\in[0,1]$.
    Then a rendering process with a single stroke $[r'(p)](u)=\exp(-\min_s || C(s, p) - u ||^2_2/w^2)\cdot c$ is a $\gA$-equivariant map,
    where $\rho'$ is defined as $\rho'(a)(t_1, t_2, t_3, t_4, c, w)=(a.t_1, a.t_2, a.t_3, a.t_4, c, w)$.
\end{lemma}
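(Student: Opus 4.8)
The plan is to establish the equivariance identity $r'(\rho'(a)p)=\rho(a)\,r'(p)$ pointwise on $\Omega$: since $[\rho(a)r'(p)](u)=[r'(p)](a^{-1}u)$ by the definition of $\rho$, it suffices to show $[r'(\rho'(a)p)](u)=[r'(p)](a^{-1}u)$ for every $a\in\gA$ and $u\in\Omega$. Because $\rho'(a)$ leaves $c$ and $w$ fixed, the entire $a$-dependence of $r'$ sits in the transported Bézier curve $C(\cdot,\rho'(a)p)$, so the whole argument reduces to tracking how $C$ behaves under $a$.

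First I would prove the covariance of the curve, $C(s,\rho'(a)p)=a.C(s,p)$ for all $s\in[0,1]$. Writing an affine map as $a.t=At+b$ and abbreviating the basis polynomials of the cubic as $\alpha_i(s)$, this follows from
\begin{equation}
C(s,\rho'(a)p)=\sum_i \alpha_i(s)\,(A t_i+b)=A\,C(s,p)+b\sum_i\alpha_i(s)=a.C(s,p),
\end{equation}
where the last step uses that the $\alpha_i$ form a partition of unity, $\sum_i\alpha_i(s)=1$. This is the classical affine-invariance of Bézier curves; it requires the standard Bernstein weights (the binomial factors $1,3,3,1$), so the basis polynomials in the statement should be read with those coefficients for the translation part $b$ to cancel correctly. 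For the purely linear part ($b=0$) the covariance already holds by linearity alone.

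Next I would substitute this into the Gaussian falloff. Using $u=a.(a^{-1}u)=A(a^{-1}u)+b$, the displacement collapses to
\begin{equation}
a.C(s,p)-u=A\,C(s,p)+b-\bigl(A(a^{-1}u)+b\bigr)=A\bigl(C(s,p)-a^{-1}u\bigr),
\end{equation}
so that $\min_s\|C(s,\rho'(a)p)-u\|_2^2=\min_s\|A\bigl(C(s,p)-a^{-1}u\bigr)\|_2^2$. When $A$ preserves the Euclidean norm this equals $\min_s\|C(s,p)-a^{-1}u\|_2^2$, and because $c$ and $w$ are untouched the two rendered intensities agree, yielding exactly $[r'(p)](a^{-1}u)$ and closing the identity.

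The crux — and what I expect to be the main obstacle — is precisely this last norm step, because the rasterizer uses the fixed Euclidean metric and a fixed width $w$. For a general affine $A$ containing scaling or shear one has $\|A\delta\|_2\neq\|\delta\|_2$, which changes both the value of the Gaussian and the minimizing $s\in[0,1]$; equivalently, scaling the control points while holding $w$ fixed does not scale the rendered stroke thickness, so pointwise equality can fail. The clean resolution is to prove the identity verbatim for the isometry subgroup (orthogonal maps together with translations), where $A^\top A=\mathrm{Id}$ makes the norm step exact, and then to recover the remaining affine directions either by letting $\rho'$ also transport $w$ by the local scale induced by $A$, or by noting that the downstream lemmas only require the curve-level covariance of the first step rather than pointwise equality of the rasterized intensities.
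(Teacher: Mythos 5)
Your proposal follows essentially the same route as the paper's proof: both reduce the claim to the pointwise identity $[r'(\rho'(a)p)](u)=[r'(p)](a^{-1}u)$ and establish it by pushing the affine map through the B\'ezier parameterization and then through the Gaussian falloff. The added value of your version is that you make explicit two steps the paper's proof performs silently, and both of your concerns are genuine. First, the paper's chain of equalities passes from $\min_s\|C(s,p)-a^{-1}u\|_2^2$ to $\min_s\|a.C(s,p)-u\|_2^2$ in a single line with no justification; writing $a.t=At+b$, this amounts to $\|A(C(s,p)-a^{-1}u)\|_2=\|C(s,p)-a^{-1}u\|_2$, which holds only when $A$ is orthogonal. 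For a shear or anisotropic scaling the exponent (and in general the minimizing $s$) changes because $w$ and the Euclidean metric are held fixed, so the pointwise identity is valid verbatim only on the isometry subgroup of $\gA$, exactly as you say. Second, the basis polynomials $(1-s)^3,\,s(1-s)^2,\,s^2(1-s),\,s^3$ as written do not sum to $1$ (at $s=1/2$ they sum to $1/2$), so the interchange $a.\sum_i\alpha_i(s)t_i=\sum_i\alpha_i(s)\,a.t_i$ used in the paper fails for the translation part of $a$ unless the Bernstein coefficients $1,3,3,1$ are restored; your partition-of-unity observation is the correct diagnosis.

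Two small caveats on your proposed repairs. Retreating to curve-level covariance does not by itself rescue the downstream argument, because Lemma~\ref{lemma:stroke} explicitly consumes the hypothesis that $r$ (hence $r'$) is $\gA$-equivariant as a map into $\gS$, i.e.\ pointwise equality of rasterized intensities. And transporting $w$ by the local scale of $A$ repairs only the similarity subgroup (isometries plus uniform scaling); under a shear the isotropic falloff $\exp(-d^2/w^2)$ becomes anisotropic and no scalar width adjustment restores equality. So the honest scope of the lemma as an exact statement is similarities with $w$ transported, or isometries with $w$ fixed --- a restriction the paper does not acknowledge.
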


\begin{proof}
    \begin{align}
    \begin{split}
        \rho(a)[r'(p)](u) &= [r'(p)](a^{-1}.u) \\
                         &=\exp(-\min_s || C(t, p) - a^{-1}u ||^2_2/w^2)\cdot c \\ 
                         &=\exp(-\min_s || a.[(1-s)^3t_1 + s(1-s)^2t_2 + s^2(1-s)t_3 + s^3t_4] - u ||^2_2/w^2)\cdot c \\ 
                         &=\exp(-\min_s || [(1-s)^3a.t_1 + s(1-s)^2a.t_2 + s^2(1-s)a.t_3 + s^3a.t_4] - u ||^2_2/w^2)\cdot c \\ 
                         &=\exp(-\min_s || C(s, \rho'(a)p) - u ||^2_2/w^2)\cdot c \\ 
                         &=[r'(\rho'(a)p)](u). 
    \end{split}
    \end{align}
    Therefore, the rendering process with a single stroke $r':p\rightarrow\gS$ is a $\gA$-equivariant map.
\end{proof}

Assuming that strokes do not overlap with each other, we can easily expand \cref{lemma:renderer} with the process of a set of strokes $r:\vp\rightarrow\gS$ by composing $r'(p)$ of each stroke into a single image using the composition function described in Appendix \ref{appendix:renderer}.

To make $r:\vp\rightarrow\gS$ injective, we must restrict the range of $f^*$.
For example, the direction of each stroke does not change the rasterized sketch, i.e., $r'(t_1, t_2, t_3, t_4, c, w)=r'(t_4, t_3, t_2, t_1, c, w)$.
Moreover, the order between each stroke is also invariant for sketches if the strokes do not overlap each other, i.e., $r(\{p^{(1)}, p^{(2)}\})=r(\{p^{(2)}, p^{(1)}\})$.
Thus, we treat the strokes generated by Stroke Generator in \Cref{subsec:lbs} as a permutation invariant set, 
forcing the order of each stroke to be left to right with aligning $\vp_{gt}$ as described in Appendix \ref{appendix:implementation}.

Combining \cref{lemma:sketch}, \cref{lemma:stroke}, \cref{lemma:renderer}, we obtain \Cref{prop:proposition}.

\section{Ablation on integrated representation}
\label{appendix:rep_ablation}

\begin{table}[h!]
    \centering
    \caption{Quantitative results on ablating the integrated representation $z_{LBS+}$, using InfoNCE loss as $\gL_{embed}$. CLEVR-C refers to the classification of the colors of the rightmost object on the CLEVR dataset, while CLEVR-S refers to the classification of the shapes.}
    \begin{adjustbox}{width=0.5\linewidth}
    \begin{tabular}{lccccc}
        \toprule
                    & Geoclidean & CLEVR-C & CLEVR-S & STL-10 \\ \midrule
$z_e$               & 24.55\stdv1.05          & 74.35\stdv0.82          & 64.99\stdv0.49          & 77.09\stdv0.22          \\
$z_p$               & 50.60\stdv2.12\grayc          & \textbf{84.70}\stdv0.52\grayc & 63.74\stdv3.02\grayc          & 72.52\stdv0.44\grayc          \\
$z_h$               & 50.68\stdv2.77          & 71.32\stdv0.56          & 52.35\stdv1.36          & 73.29\stdv0.42          \\
$(z_p, z_h)$     & \textbf{50.97}\grayc\stdv1.85 & 78.47\grayc\stdv0.18          & 62.48\stdv2.05\grayc          & 75.01\stdv0.29\grayc          \\
full                & 50.01\stdv1.58          & 83.76\stdv0.54          & \textbf{70.89}\stdv1.56 & \textbf{79.15}\stdv0.22 \\

        \bottomrule
    \end{tabular}\end{adjustbox}
    \label{table:7}
\end{table}

We perform the ablation study on \model~(InfoNCE), which uses InfoNCE loss as $\gL_{embed}$.
As summarized in Table \ref{table:7}, utilizing both $z_p$ and $z_h$ improves performance in various tasks compared to using only $z_e$. 
The interesting point is that the concatenated stroke $z_p$ is competitive with conventional features from the CNN encoder with fully aggregated representations. 
Since $z_h$ seems less important than $z_p$ or $z_e$ for tasks on CLEVR and Geoclidean, we observe that $z_h$ can improve classification accuracy on STL-10 by providing aggregated information about the strokes.

\section{Ablation on model architecture} \label{appendix:architecture}

\begin{table}[h!]
    \centering
    \caption{Quantitative results on ablating CLIP model. The model is trained on the STL-10 dataset and evaluated for tasks on STL-10 (Class) and CLEVR (Color, Size, and Shape).}
    \begin{adjustbox}{width=0.4\linewidth}
        \begin{tabular}{lcccc}
            \toprule
            Architecture & Class & Color & Size & Shape \\ \midrule
            RN101 & 79.15 & \textbf{53.87} & \textbf{92.46} & \textbf{59.52} \\
            RN50 & 79.56\grayc & 47.96\grayc & 89.76\grayc & 55.73\grayc \\
            RN50x4 & \textbf{79.88} & 49.25 & 90.07 & 56.70 \\
            ViT-B/32 & 79.78\grayc & 50.55\grayc & 90.63\grayc & 57.46\grayc\\
            ViT-B/16 & 79.06 & 47.99 & 90.08 & 57.73\\
            \bottomrule
        \end{tabular}\end{adjustbox}
    \label{table:clip_ablation}
\end{table}

Since the authors of CLIP provided pre-trained models with multiple backbone architectures\footnote{https://github.com/openai/CLIP}, we conducted an ablation study on the architecture of the CLIP model used to measure $\gL_{percept}$.
Table~\ref{table:clip_ablation} shows that the performance does not vary significantly across different designs of the CLIP model, with the use of the ResNet101 architecture resulting in better performance on tasks involving the CLEVR dataset.

For the architecture for the image encoder, we observed that training features with the ResNet architecture are substantially better than using ViT~\cite{vit} in terms of both overall sketch quality and quantitative metrics.

\section{Ablation study on loss function}
\label{appendix:loss_ablation}
\begin{figure*}[h!]
    \centering
    \includegraphics[width=0.97\textwidth]{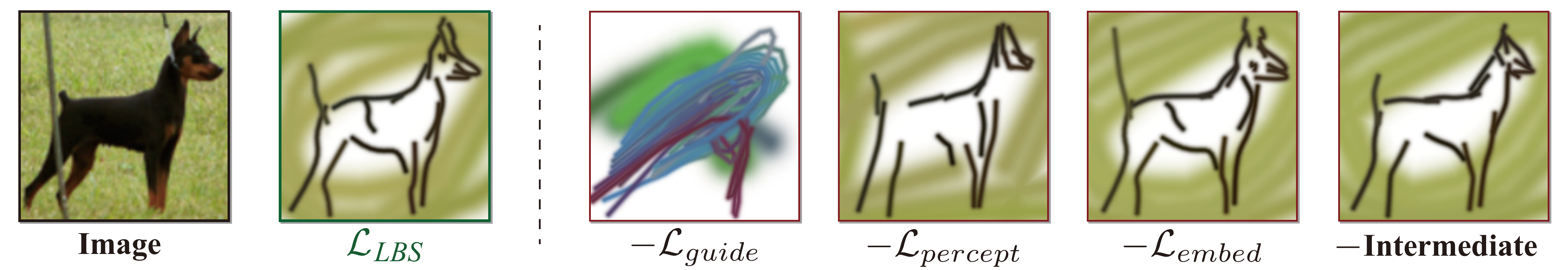}
    \vspace*{-0.5em}
    \caption{
    Qualitative results on ablating loss terms in $\gL_{LBS}$ trained on STL-10 dataset. The importance of $\gL_{guide}$ for training \model{} is highlighted. The visualization shows the qualitative effect of removing each loss component.
    }
    \label{figure:ablation_loss}
    \vspace*{-1em}
\end{figure*}  

\begin{table}[h]
    \centering
    \caption{
    Quantitative results on ablating loss terms in $\gL_{LBS}$ trained on STL-10 dataset.
    The classification result in STL-10 is denoted as Class, while other results are evaluated on the CLEVR dataset.
    \model{} is trained with $\lambda_e=0$, while \model{}~(InfoNCE) is trained with $\lambda_e\neq 0$. The results demonstrate the impact of the proposed loss function on the model's performance.
    }
    \begin{adjustbox}{width=0.5\linewidth}
    \begin{tabular}{lcccccc}
        \toprule
        & Class & RC                & BC                & Size              & Shape          & Material            \\ \midrule
        \model & 78.07             & 53.87             & 61.84             & \textbf{92.46}    & 59.52          & 80.06          \\                   
        \model~(InfoNCE) & \textbf{79.15}\grayc & \textbf{54.53}\grayc    & 62.26\grayc             & 91.81\grayc             & \textbf{59.62}\grayc & \textbf{80.44}\grayc     \\
        $-\gL_{guide}$   & 74.70          & 33.00             & 45.24             & 83.26             & 50.26          & 70.01            \\
        $-\gL_{percept}$  & 78.88\grayc           & 53.79\grayc             & \textbf{62.77}\grayc    & 91.49\grayc             & 59.00\grayc          & 79.96\grayc         \\
        $-$Intermediate  & 78.13             & 50.64             & 58.70             & 91.17             & 57.71          & 77.74          \\
        \bottomrule
        \end{tabular}
        \end{adjustbox}
    \label{table:6}
\end{table}

To investigate the effect of each proposed loss term on the learned features, we conducted experiments by ablating $\gL_{guide}$ and $\gL_{percept}$, and disabling the progressive optimization process for intermediate layers (\ie, using $\gL_{guide}$ with only the final layer's output).
Additionally, we compare the results of \model~(InfoNCE) to examine the impact of $z_h$ on the learned features.
The results presented in \cref{table:6} and \cref{figure:ablation_loss} show that all loss terms contribute to the final performance of the model in terms of generating informative and high-quality sketches. 
By including $\gL_{embed}$, we observed a slight improvement in performance on the CLEVR dataset.
Additionally, we observe a meaningful improvement in high-level inference tasks, such as classification on the STL-10 dataset, while maintaining the quality of the generated sketches.
The loss term $\gL_{guide}$ is crucial for training \model,
and guiding the intermediate layers also improves the quantitative results while generating higher-quality sketches. 
The inclusion of $\gL_{percept}$ in the loss function resulted in a discernible improvement in the quality of the generated sketches with a measurable performance boost in the quantitative comparison.

\newpage

\section{Qualitative results} \label{appendix:qualitative}

\begin{figure*}[h]
    \centering
    \includegraphics[width=0.9\textwidth]{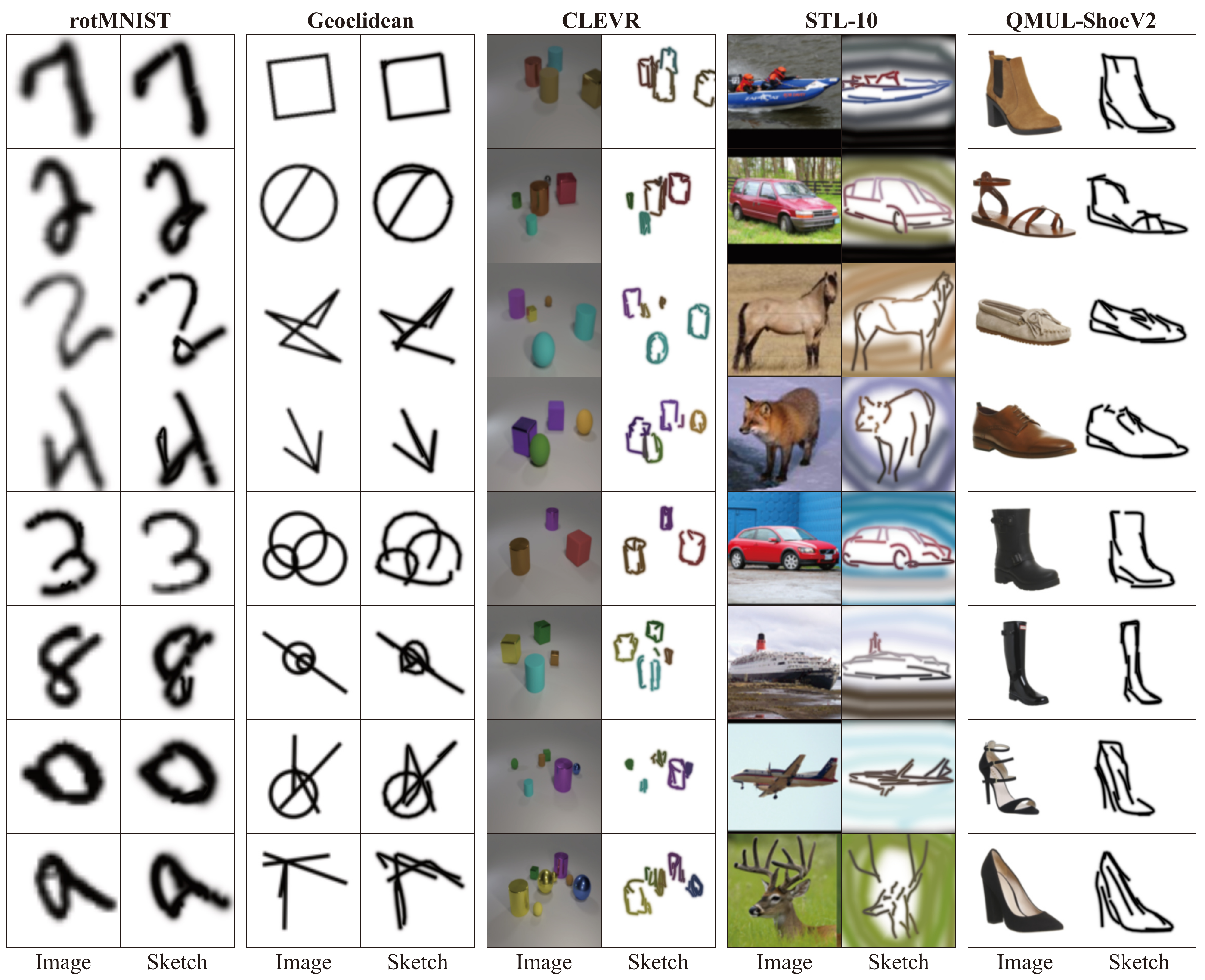}
    \vspace*{-0.5em}
    \caption{Qualitative results on all datasets. We evaluate \model{} against various vision datasets and visualize example sketches for each dataset. \model{} can effectively represent images' geometric information across various datasets, including rotMNIST, Geoclidean, CLEVR, STL-10, and QMUL-ShoeV2.}
    \label{figure:qualitative}
    \vspace*{-1em}
\end{figure*}

\begin{figure*}[h]
    \centering
    \includegraphics[width=0.90\textwidth]{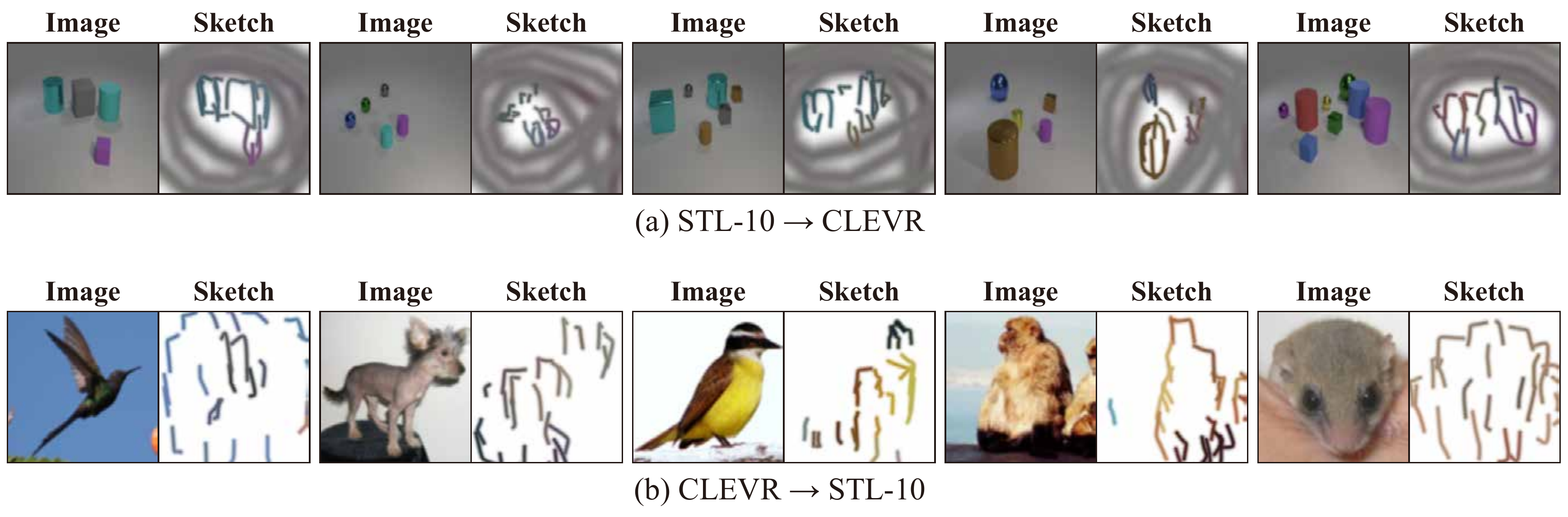}
    \vspace*{-0.5em}
    \caption{Qualitative results of the domain transfer experiment, demonstrating the ability of \model{} to transfer learned geometric information across different domains.
    }	
    \label{figure:Transfer}
    \vspace*{-1em}
\end{figure*}

\clearpage

\begin{figure*}[t]
    \centering
    \includegraphics[width=0.9\textwidth]{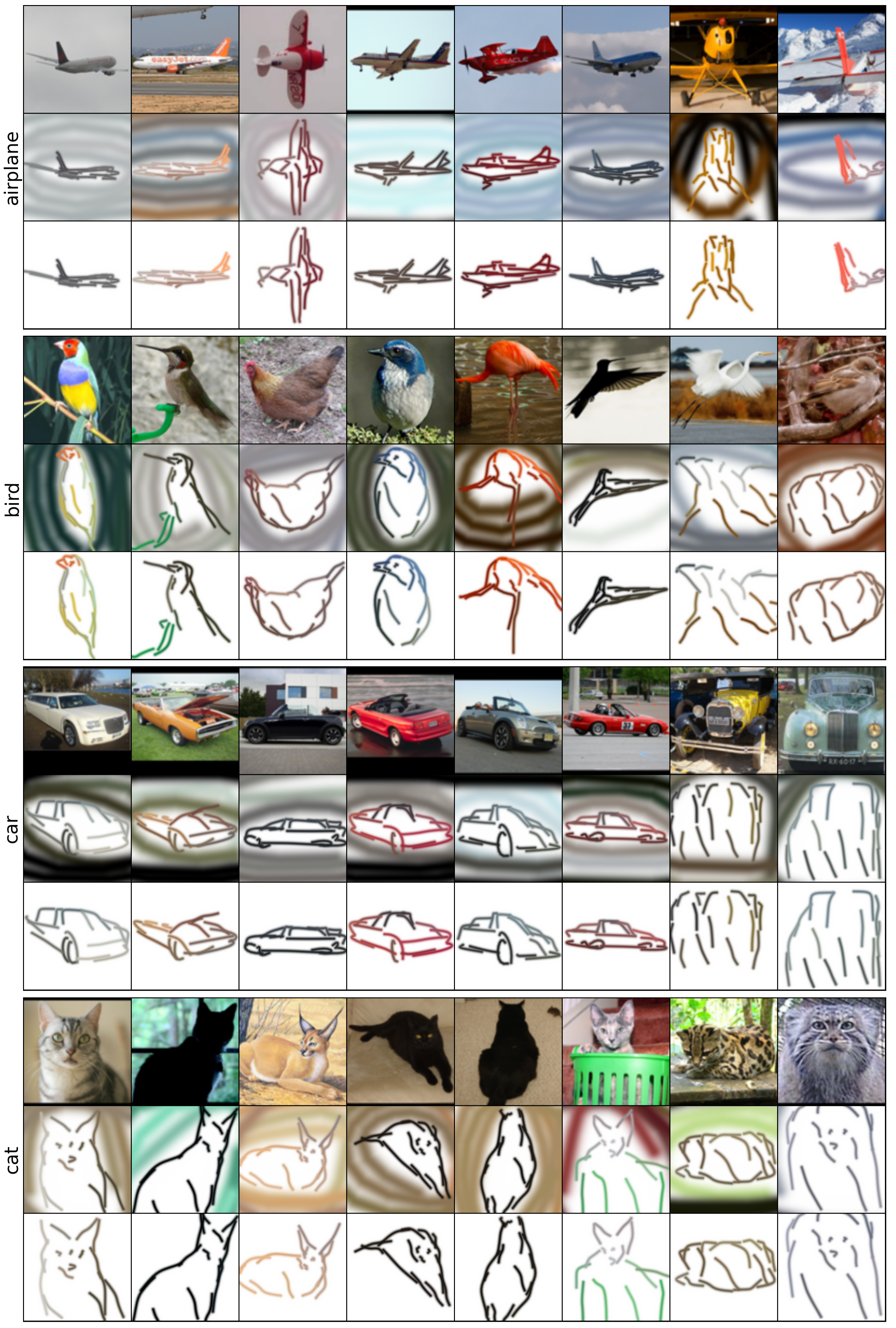}
    \caption{Qualitative results in STL-10. Results for airplane, bird, car, and cat categories on STL-10. Two samples on the right are examples of low-quality generation.}
    \label{figure:qualitative_1}
\end{figure*}

\begin{figure*}[t]
    \centering
    \includegraphics[width=0.9\textwidth]{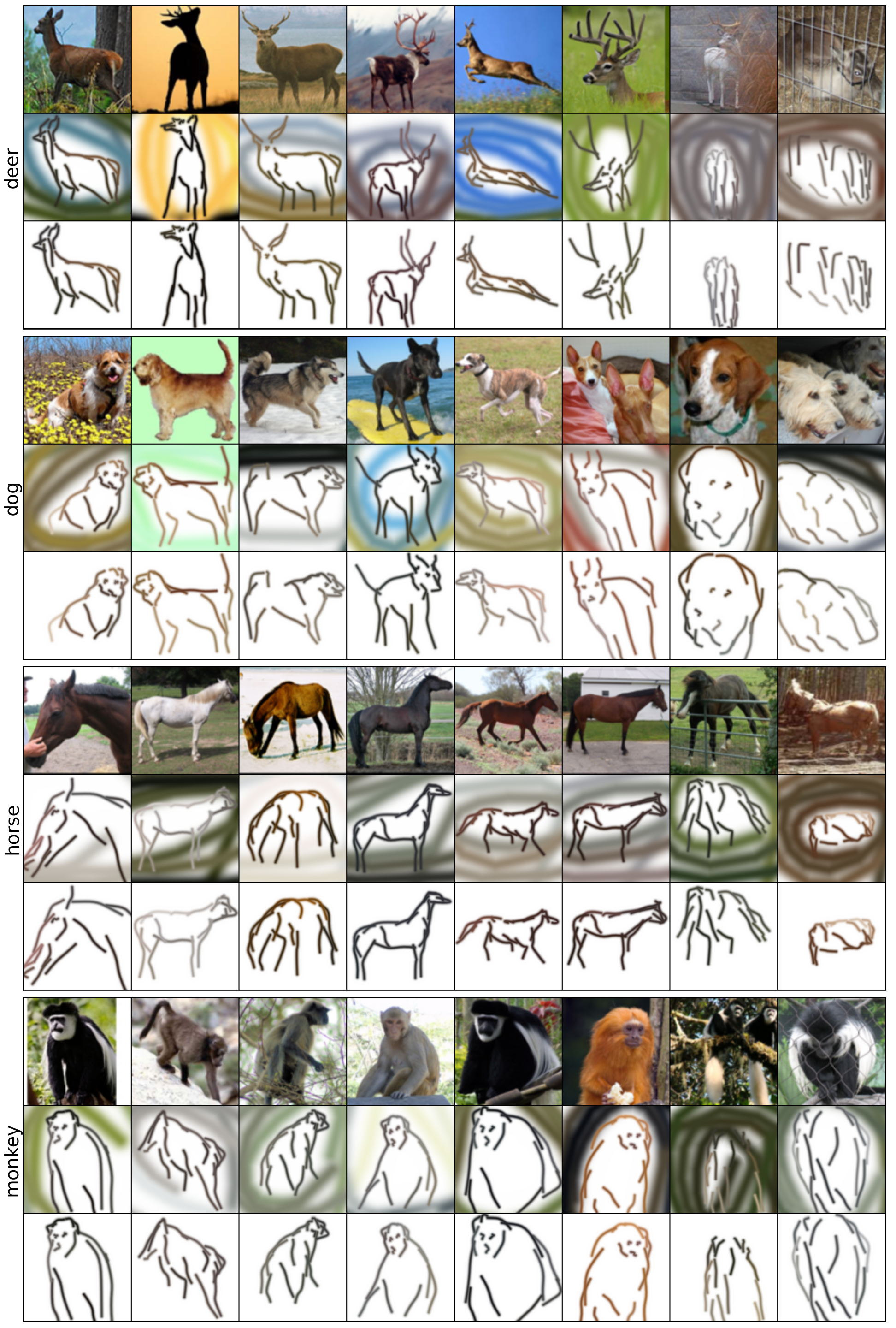}
    \caption{Qualitative results in STL-10. Results for deer, dog, horse, and monkey categories on STL-10. Two samples on the right are examples of low-quality generation.}
    \label{figure:qualitative_2}
\end{figure*}

\begin{figure*}[t]
    \centering
    \includegraphics[width=0.9\textwidth]{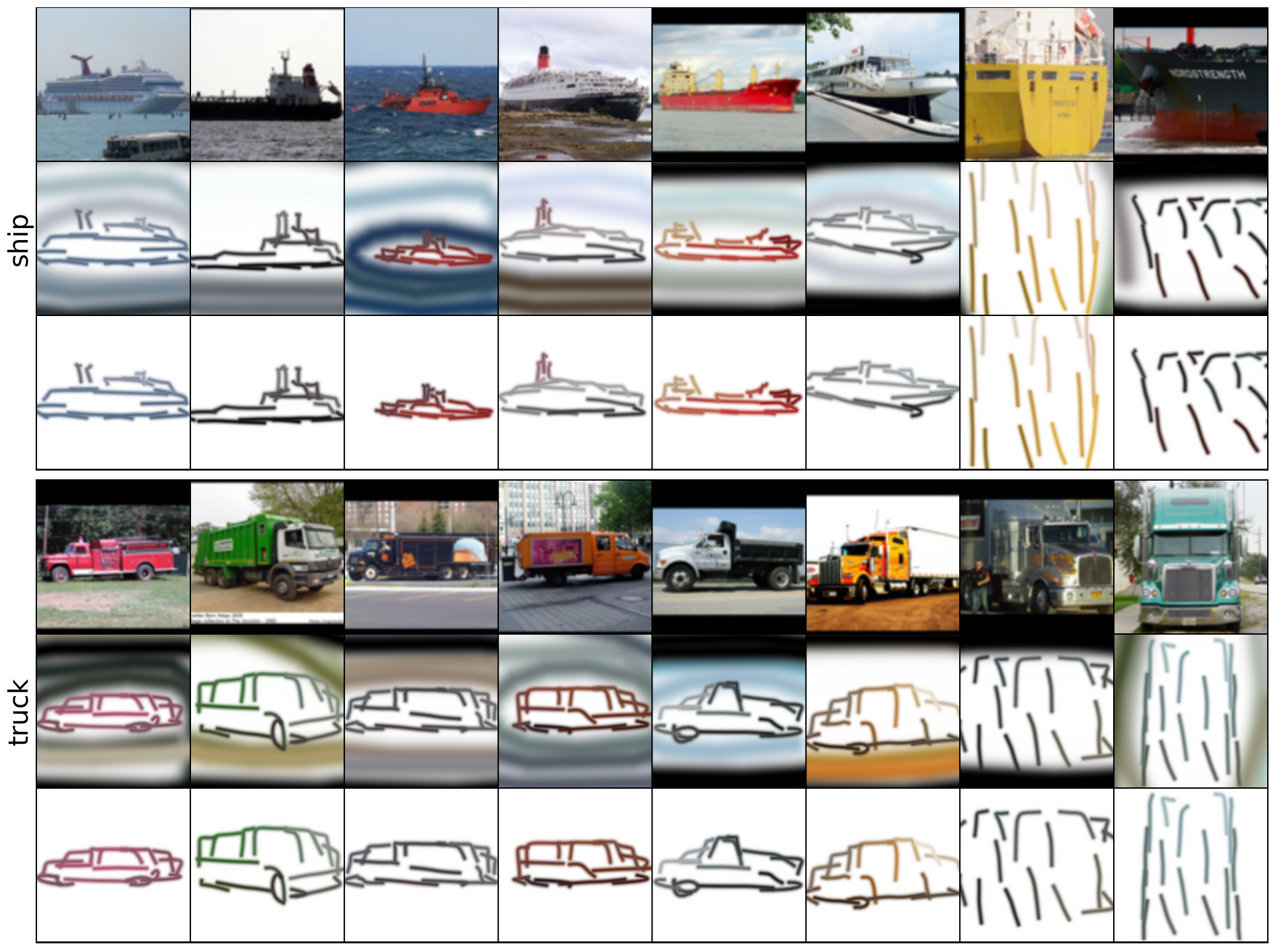}
    \caption{Qualitative results in STL-10. Results for ship and truck categories on STL-10. Two samples on the right are examples of low-quality generation.}
    \label{figure:qualitative_3}
\end{figure*}

\end{document}